\documentclass[accepted]{uai2026} 
                        
\usepackage[british]{babel}

\usepackage{natbib} 
\usepackage{mathtools} 
\usepackage{amssymb}
\usepackage{float}
\usepackage{array}
\usepackage{booktabs} 
\usepackage{tikz} 
\usepackage[ruled,lined]{algorithm2e}
\usepackage{bm}
\usepackage{amsthm}
\usepackage{makecell}
\usepackage{multirow}
\newcolumntype{M}[1]{>{\centering\arraybackslash}m{#1}}
\newtheorem{theorem}{Theorem}
\newtheorem{lemma}[theorem]{Lemma}
\newtheorem{proposition}[theorem]{Proposition}
\newtheorem{definition}[theorem]{Definition}
\newtheorem{assumption}{Assumption}
\newtheorem{property}[theorem]{Property}
\newtheorem{remark}[theorem]{Remark}

\title{Equivariant Neural Belief Propagation}

\author[1]{Zehua Cheng}
\author[2]{Wei Dai}
\author[2]{Jiahao Sun}
\affil[1]{%
    Department of Computer Science\\
    University of Oxford\\
    Oxford, United Kingdom
}
\affil[2]{%
    FLock.io\\
    London, United Kingdom
}
\begin{document}
\maketitle

\begin{abstract}
Probabilistic inference over spatially embedded variables requires beliefs that respect $SE(3)$ symmetry, yet existing equivariant networks produce only scalars and vectors---not the rank-2 precision tensors needed for anisotropic uncertainty---and single-component messages collapse multi-modal energy landscapes to physically meaningless averages. We introduce Equivariant Neural Belief Propagation (ENBP), a factor-graph framework whose messages are equivariant Gaussian mixture models with sufficient statistics that transform exactly under $SE(3)$. Rank-2 precision matrices are synthesised via equivariant outer products, ingested through differentiable spectral decomposition, and kept tractable by a greedy KL-based mixture reduction that provably commutes with $SE(3)$. On GEOM-QM9 and GEOM-Drugs, ENBP achieves 98.9\% conformational coverage at 0.090~$\mathring{A}$ error with sub-second latency---over $100\times$ faster than diffusion baselines at higher accuracy. On multi-body robotic inference, vanilla loopy BP diverges at 15+ agents while ENBP converges with near-zero collision rates and machine-precision equivariance error (${\sim}10^{-7}$ vs.\ $10^{-1}$ for augmented baselines).
\end{abstract}

\section{Introduction}

Probabilistic inference over spatially embedded variables---predicting atomic positions in a molecule or coordinate marginals of robotic agents---is governed by $SE(3)$-invariant energy landscapes. Predicted beliefs must therefore transform equivariantly with the input \citep{satorras2021en, batzner2022e3}. Belief Propagation (BP) on factor graphs provides the natural scaffold, computing exact marginals on trees \citep{pearl1988probabilistic, kschischang2001factor} with principled extensions to loopy topologies \citep{yedidia2003understanding}. The question is not whether to use equivariant architectures or factor-graph inference, but how to unite them---a unification that existing methods have failed to achieve for structural reasons.

The root cause is a forced dilemma between geometric symmetry and representational expressivity. Current $E(n)$-equivariant networks \citep{satorras2021en, thomas2018tensor} produce invariant scalars and equivariant vectors, which suffice for energies or forces but not for probabilistic inference: the covariance of a distribution in $\mathbb{R}^3$ is a rank-2 tensor with transformation law $\boldsymbol{\Lambda} \mapsto R\boldsymbol{\Lambda}R^\top$, which is neither scalar nor vectorial. Preserving equivariance forces isotropic covariances ($\boldsymbol{\Lambda} = \lambda \mathbf{I}$), unable to distinguish radial from angular precision \citep{ganea2021geomol, schutt2017schnet}; breaking equivariance introduces systematic generalisation error that no finite augmentation eliminates \citep{elesedy2021provably}. A second deficiency compounds the first: single-Gaussian messages collapse multi-modal energy landscapes to the mean of distinct modes---a location that may carry negligible physical probability. Together, the rank-2 bottleneck and uni-modal constraint leave existing equivariant frameworks \emph{unable to represent} the geometric and statistical complexity of physical distributions.

We propose \emph{Equivariant Neural Belief Propagation} (ENBP), resolving both pathologies through a single principle: messages are \emph{equivariant Gaussian mixture models} (EGMMs) whose sufficient statistics transform exactly under $SE(3)$. Three interlocking mechanisms realise this principle:
\begin{enumerate}
    \item \textbf{Precision tensor synthesis.} Each component's precision matrix is built from learned equivariant basis vectors via $\boldsymbol{\Lambda}_k = \sum_p \sigma_k^{(p)} \mathbf{v}_k^{(p)} {\mathbf{v}_k^{(p)}}^\top + \epsilon \mathbf{I}$, guaranteeing positive-definiteness and $\boldsymbol{\Lambda}_k \mapsto R\boldsymbol{\Lambda}_k R^\top$.
    \item \textbf{Spectral ingestion.} Incoming precision matrices are eigendecomposed into invariant eigenvalues and equivariant eigenvectors, enabling standard equivariant layers to consume full anisotropic covariance.
    \item \textbf{Equivariant mixture reduction.} The exponential blowup from multiplying $K$-component mixtures is controlled by greedy KL-based merging that provably commutes with $SE(3)$.
\end{enumerate}

Experiments are designed to falsify the diagnosis. Isotropic precision causes large, systematic degradation; scaling $K{=}1 \to 4$ triggers a phase transition in collision avoidance on loopy robotic graphs; and ENBP achieves equivariance error ${\sim}10^{-7}$ while augmented baselines remain at ${\sim}10^{-1}$. On GEOM-QM9 and GEOM-Drugs, ENBP reaches 98.9\% coverage at 0.090~$\mathring{A}$ error with sub-second latency---over $100\times$ faster than diffusion baselines at higher accuracy. On multi-body robotic inference, vanilla loopy BP diverges at 15+ agents while ENBP converges with near-zero collision rates.

In summary, the contributions of this paper are:
\begin{enumerate}
    \item An equivariant Gaussian mixture message representation that simultaneously captures multi-modal distributions and full anisotropic covariance while transforming exactly under $SE(3)$.
    \item A rank-2 tensor synthesis--ingestion loop---outer-product precision construction paired with differentiable spectral decomposition---that enables standard equivariant architectures to produce and consume covariance matrices without violating symmetry.
    \item Formal proofs of positive-definiteness (Lemma~1), $SO(3)$-equivariance of precision tensors (Lemma~2), equivariance-preserving spectral ingestion (Proposition~1) and mixture reduction (Proposition~2), loss invariance (Theorem~2), end-to-end $SE(3)$-equivariance (Theorem~3), and convergence stabilisation (Lemma~3).
    \item State-of-the-art results on molecular conformation and multi-body robotic inference, with ablations confirming that each architectural component is individually necessary.
\end{enumerate}

\begin{figure*}\centering
    \includegraphics[width=0.95\textwidth]{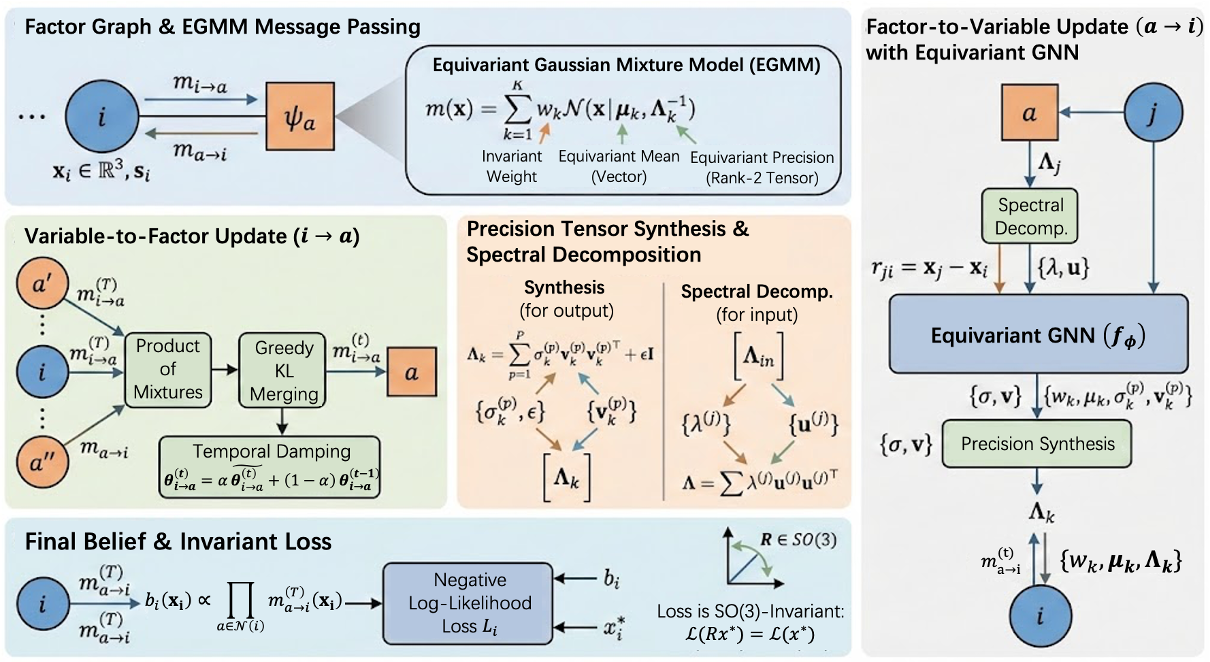}
    \caption{Overview of the Equivariant Neural Belief Propagation (ENBP) architecture. (1) Factor Graph \& EGMM Message Passing: The probabilistic inference problem is structured as a bipartite factor graph where variable and factor nodes exchange messages. To capture multi-modal energy landscapes, messages are parameterized as Equivariant Gaussian Mixture Models (EGMMs). Each mixture component consists of an invariant mixing weight, an $SE(3)$-equivariant mean vector, and an $SO(3)$-equivariant rank-2 precision matrix.}
\end{figure*}

\section{Related Works}
\label{sec:related_works}

\paragraph{Belief propagation and probabilistic inference on graphs.}
Belief propagation (BP) was introduced by \citet{pearl1988probabilistic} for exact inference on tree-structured graphical models and later extended to general factor graphs through the sum-product formalism \citep{kschischang2001factor}. When applied to graphs containing cycles, loopy BP can oscillate or diverge \citep{murphy1999loopy}, and its fixed points correspond to stationary points of the Bethe free energy rather than true marginals \citep{yedidia2003understanding}. Various strategies have been proposed to stabilise loopy BP, including message damping \citep{heskes2004uniqueness}, tree-reweighted methods \citep{wainwright2003tree}, and convergent alternatives based on convex relaxations \citep{globerson2007fixing}. In continuous domains, Expectation Propagation \citep{minka2001expectation} approximates intractable messages by moment-matching within the exponential family, while Variational Message Passing \citep{winn2005variational} restricts the variational distribution to a factorised form. Our framework inherits the iterative message-passing structure of BP but replaces scalar or single-Gaussian messages with equivariant Gaussian mixtures, operating directly on the factor graph topology rather than collapsing it.

\paragraph{Geometric and equivariant neural networks.}
Encoding physical symmetries directly into neural architectures has yielded substantial gains in data efficiency and generalisation \citep{elesedy2021provably}. Tensor Field Networks \citep{thomas2018tensor} pioneered the use of spherical harmonics to construct $SE(3)$-equivariant layers, and subsequent architectures have simplified this construction through invariant message passing \citep{schutt2017schnet, schutt2021equivariant}, equivariant coordinate updates \citep{satorras2021en}, or steerable features \citep{brandstetter2022geometric, batzner2022e3}. PaiNN \citep{schutt2021equivariant} interleaves equivariant message and update blocks to process both scalar and vectorial features, while DimeNet \citep{gasteiger2020directional} incorporates directional information through Bessel basis embeddings of interatomic angles. SEGNN \citep{brandstetter2022geometric} generalises steerable convolutions to arbitrary tensor orders on graphs. These methods, however, are designed for discriminative prediction (e.g.\ energy or force regression) and output point estimates rather than full probability distributions. ENBP addresses this gap by embedding an $E(n)$-equivariant network \citep{satorras2021en} within a belief propagation loop, enabling it to produce calibrated, multi-modal marginal distributions while retaining exact geometric equivariance.

\paragraph{Molecular conformation generation.}
Predicting three-dimensional molecular conformations from a connectivity graph has received growing attention as a testbed for geometric generative models. Early learning-based approaches predicted interatomic distances or torsion angles directly \citep{ganea2021geomol, xu2022confgnn}, but struggled to capture multi-modal torsional distributions. Score-based diffusion models brought substantial improvements: GeoDiff \citep{xu2022geodiff} applies a diffusion process in Cartesian coordinate space, while Torsional Diffusion \citep{jing2022torsional} restricts the diffusion to the hypertorus of dihedral angles, significantly reducing the dimensionality of the generative problem. More recent methods explore flow matching on Riemannian manifolds \citep{klein2024equivariant} and latent-space diffusion with $SE(3)$ invariance. Despite their strong empirical performance, diffusion-based methods require many sequential denoising steps at inference time, incurring high computational cost. In contrast, ENBP generates a full multi-modal belief in a fixed number of BP iterations, yielding competitive coverage and structural accuracy at a fraction of the latency.


\section{Methodology}
\label{sec:methodology}

Formal definitions of factor graphs, belief propagation, and group equivariance are provided in Appendix~\ref{app:preliminaries}.

\subsection{Problem Formulation and Factor Graph Structure}

We consider probabilistic inference over $N$ continuous spatial variables $\mathbf{x} = \{{\mathbf{x}}_1, \dots, {\mathbf{x}}_N\}$, where each ${\mathbf{x}}_i \in \mathbb{R}^3$ represents the position of a physical entity (e.g.\ an atom or robotic agent). Each variable node carries invariant scalar features $s_i \in \mathbb{R}^{d_s}$ (e.g.\ atomic number) that do not transform under spatial operations. The joint distribution factorises over $M$ local potentials \citep{pearl1988probabilistic, kschischang2001factor}:
\begin{equation}
\label{eq:joint}
p(\mathbf{x}) = \frac{1}{Z} \prod_{a=1}^{M} \psi_a(\mathbf{x}_{\mathcal{N}(a)}),
\end{equation}
where $\mathcal{N}(a) \subset \{1,\dots,N\}$ indexes the variables in factor $a$ and $Z$ is the partition function. In physical systems, $p$ is $SE(3)$-invariant: $p(R\mathbf{x} + \mathbf{t}) = p(\mathbf{x})$ for all $R \in SO(3)$, $\mathbf{t} \in \mathbb{R}^3$. Any inference framework must therefore produce equivariant marginal beliefs.

We represent Eq.~\eqref{eq:joint} as a bipartite factor graph $\mathcal{G} = (\mathcal{V}, \mathcal{F}, \mathcal{E})$, preserving the bipartite topology to encode conditional independence and accommodate higher-order interactions within single factor nodes. On acyclic graphs, BP computes exact marginals \citep{pearl1988probabilistic}; on the cyclic graphs arising in closed-ring molecules and coupled robotic formations, standard BP may diverge \citep{yedidia2003understanding, murphy1999loopy}, motivating the stabilisation mechanisms in Section~\ref{subsec:message_updates}.

\subsection{Equivariant Gaussian Mixture Messages}
\label{subsec:gmm_messages}

Messages should represent probability distributions rather than opaque latent vectors \citep{sudderth2003nonparametric, winn2005variational}. Single-Gaussian messages collapse multi-modal landscapes to a mean that may carry negligible physical probability \citep{ganea2021geomol}. We therefore parameterise each message as an Equivariant Gaussian Mixture Model (EGMM) with $K$ components:
\begin{equation}
\label{eq:egmm}
m_{a \to i}(\mathbf{x}_i) = \sum_{k=1}^{K} w_k \, \mathcal{N}\!\left(\mathbf{x}_i \,\middle|\, \boldsymbol{\mu}_k,\, \boldsymbol{\Lambda}_k^{-1}\right),
\end{equation}
where $w_k > 0$ with $\sum_k w_k = 1$, $\boldsymbol{\mu}_k \in \mathbb{R}^3$ is an $SE(3)$-equivariant mean, and $\boldsymbol{\Lambda}_k \in \mathbb{R}^{3 \times 3}$ is an $SO(3)$-equivariant precision matrix. Under rotation $R$, the weights are invariant, $\boldsymbol{\mu}_k \mapsto R\boldsymbol{\mu}_k$, and $\boldsymbol{\Lambda}_k \mapsto R\boldsymbol{\Lambda}_k R^\top$.

\subsection{Precision Tensor Synthesis via Equivariant Outer Products}
\label{subsec:precision}

Standard $E(n)$-equivariant networks \citep{satorras2021en} produce scalars and vectors but cannot output rank-2 tensors; scalar-only construction restricts $\boldsymbol{\Lambda}_k$ to the isotropic form $\lambda_k \mathbf{I}$, which cannot capture the anisotropic uncertainties of molecular bonds \citep{batzner2022e3, schutt2017schnet}. We synthesise $\boldsymbol{\Lambda}_k$ from $P$ learned equivariant basis vectors $\{\mathbf{v}_k^{(p)}\}$ and positive scalar coefficients $\{\sigma_k^{(p)}\}$ (enforced via softplus):
\begin{equation}
\label{eq:precision}
\boldsymbol{\Lambda}_k = \sum_{p=1}^{P} \sigma_k^{(p)} \, \mathbf{v}_k^{(p)} {\mathbf{v}_k^{(p)}}^\top + \epsilon \, \mathbf{I},
\end{equation}
where $\epsilon > 0$ guarantees positive-definiteness. Since each $\mathbf{v}_k^{(p)} \mapsto R\mathbf{v}_k^{(p)}$, the full construction satisfies $\boldsymbol{\Lambda}_k \mapsto R\boldsymbol{\Lambda}_k R^\top$. Setting $P \geq 3$ yields full rank when the basis vectors are not co-planar; in all experiments $P = 3$ and $\epsilon = 10^{-4}$.

\subsection{Spectral Decomposition for Tensor Ingestion}
\label{subsec:spectral}

The receiving node must ingest the rank-2 precision tensor as input features. Standard equivariant layers \citep{satorras2021en, thomas2018tensor} accept scalars and vectors but not $3 \times 3$ matrices; flattening would destroy the $SO(3)$ transformation law. We apply a differentiable eigendecomposition:
\begin{equation}
\label{eq:eigen}
\boldsymbol{\Lambda}_k = \sum_{j=1}^{3} \lambda_k^{(j)} \, \mathbf{u}_k^{(j)} {\mathbf{u}_k^{(j)}}^\top,
\end{equation}
where $\lambda_k^{(1)} \geq \lambda_k^{(2)} \geq \lambda_k^{(3)} > 0$ and $\{\mathbf{u}_k^{(j)}\}$ form an orthogonal eigenbasis. Under rotation, eigenvalues are invariant and eigenvectors co-rotate ($\mathbf{u}_k^{(j)} \mapsto R\mathbf{u}_k^{(j)}$), so the eigenvalues enter as scalar features and the eigenvectors as vector features.

Two numerical subtleties require treatment. Near-degenerate eigenvalues produce singular gradients via $(\lambda_k^{(i)} - \lambda_k^{(j)})^{-1}$ terms; following \citet{wang2021robust}, we replace each denominator with a stable approximant regularised by $\delta = 10^{-6}$. The sign ambiguity of eigenvectors ($\mathbf{u} \leftrightarrow -\mathbf{u}$) is resolved by enforcing that the component with the largest absolute value is positive.

\subsection{Message Update Rules}
\label{subsec:message_updates}

The inference algorithm proceeds through $T$ iterations of alternating variable-to-factor and factor-to-variable updates (Algorithm~\ref{alg:enbp}).

\paragraph{Variable-to-factor update.} Variable $i$ multiplies all incoming messages from $\mathcal{N}(i) \setminus \{a\}$ \citep{kschischang2001factor}. For $K$-component EGMMs, the exact product yields $K^{|\mathcal{N}(i)|-1}$ components. We handle this in two stages. First, pairwise products are computed via natural parameter summation:
\begin{equation}
\label{eq:nat_param}
\begin{aligned}
    &\boldsymbol{\Lambda}_{12} = \boldsymbol{\Lambda}_1 + \boldsymbol{\Lambda}_2,\\
    &\boldsymbol{\mu}_{12} = \boldsymbol{\Lambda}_{12}^{-1}(\boldsymbol{\Lambda}_1 \boldsymbol{\mu}_1 + \boldsymbol{\Lambda}_2 \boldsymbol{\mu}_2).
\end{aligned}
\end{equation}
Second, the expanded mixture is collapsed back to $K$ components by greedily merging the pair with the smallest closed-form KL divergence upper bound \citep{runnalls2007kullback}, using moment-matching. This reduction is equivariance-preserving: the KL criterion depends only on invariant quantities, and moment-matching of means commutes with rotation.

To mitigate the oscillatory behaviour of BP on loopy factor graphs \citep{heskes2004uniqueness}, we apply temporal damping. Let $\boldsymbol{\theta}_{i \to a}^{(t)}$ denote the concatenated natural parameters of the variable-to-factor message at iteration $t$. The damped update is:
\begin{equation}
\label{eq:damping}
\boldsymbol{\theta}_{i \to a}^{(t)} = \alpha \, \tilde{\boldsymbol{\theta}}_{i \to a}^{(t)} + (1 - \alpha) \, \boldsymbol{\theta}_{i \to a}^{(t-1)},
\end{equation}
where $\tilde{\boldsymbol{\theta}}_{i \to a}^{(t)}$ are the freshly computed parameters and $\alpha \in (0, 1]$ is a damping coefficient. This convex combination restricts the step size of the marginal updates, suppressing circular information flow in loopy substructures. While damping does not provide formal convergence guarantees for loopy BP in general \citep{heskes2004uniqueness, yedidia2003understanding}, it empirically stabilises the inference process in all configurations evaluated in Section~\ref{sec:experiments}.

\paragraph{Factor-to-variable update.} The factor-to-variable message marginalises the potential over all neighbouring variables except $i$:
\begin{equation}
\label{eq:ftov}
m_{a \to i}(\mathbf{x}_i) \propto \int \psi_a(\mathbf{x}_{\mathcal{N}(a)}) \prod_{j \in \mathcal{N}(a) \setminus \{i\}} m_{j \to a}(\mathbf{x}_j) \, d\mathbf{x}_j.
\end{equation}
This integral is generally intractable \citep{sudderth2003nonparametric, minka2001expectation}. We approximate it with a learnable $E(n)$-equivariant network $f_\phi$ \citep{satorras2021en} operating over factor $a$'s local neighbourhood. The network receives spectral-decomposed sufficient statistics (eigenvalues, eigenvectors, means, weights) from each $j \in \mathcal{N}(a) \setminus \{i\}$, and operates on relative positions $\mathbf{x}_j - \mathbf{x}_i$ to enforce translation invariance (Section~\ref{subsec:data_integrity}). It outputs basis vectors, scaling coefficients, means, and logit weights, from which EGMM parameters are assembled via Eq.~\eqref{eq:precision} and softmax normalisation. The equivariance of $f_\phi$ ensures the approximate marginalisation commutes with $SE(3)$ \citep{elesedy2021provably}.

\begin{algorithm}[t]
\caption{Equivariant Neural Belief Propagation}
\label{alg:enbp}
\SetKwInOut{Input}{Input}
\SetKwInOut{Output}{Output}
\Input{Factor graph $\mathcal{G} = (\mathcal{V}, \mathcal{F}, \mathcal{E})$; node positions $\{\mathbf{x}_i\}$; scalar features $\{s_i\}$; number of iterations $T$; number of mixture components $K$; damping coefficient $\alpha$}
\Output{Marginal beliefs $\{b_i(\mathbf{x}_i)\}_{i \in \mathcal{V}}$}
Initialise all messages $m_{a \to i}^{(0)}$, $m_{i \to a}^{(0)}$ as uniform EGMM\;
\For{$t = 1$ \KwTo $T$}{
    \tcp{Variable-to-factor update}
    \ForEach{$(i, a) \in \mathcal{E}$}{
        Compute product of incoming messages $\{m_{a' \to i}^{(t-1)}\}_{a' \in \mathcal{N}(i) \setminus \{a\}}$ via Eq.~\eqref{eq:nat_param}\;
        Collapse mixture to $K$ components via greedy KL merging\;
        Apply temporal damping via Eq.~\eqref{eq:damping}\;
    }
    \tcp{Factor-to-variable update}
    \ForEach{$(a, i) \in \mathcal{E}$}{
        Spectrally decompose incoming precision matrices via Eq.~\eqref{eq:eigen}\;
        Compute relative coordinates $\mathbf{r}_{ji} = \mathbf{x}_j - \mathbf{x}_i$ for $j \in \mathcal{N}(a) \setminus \{i\}$\;
        Predict EGMM parameters $\{w_k, \boldsymbol{\mu}_k, \boldsymbol{\Lambda}_k\}_{k=1}^K$ via $f_\phi$\;
        Synthesise precision matrices via Eq.~\eqref{eq:precision}\;
    }
}
Compute final beliefs: $b_i(\mathbf{x}_i) \propto \prod_{a \in \mathcal{N}(i)} m_{a \to i}^{(T)}(\mathbf{x}_i)$\;
\end{algorithm}

\subsection{Objective Function}
\label{subsec:loss}

We optimise $\phi$ by maximising the log-likelihood of observed configurations $\mathbf{x}_i^*$ under the predicted EGMM beliefs:
\begin{equation}
\label{eq:nll}
\mathcal{L}_i = -\log \sum_{k=1}^{K} w_k \, \mathcal{N}\!\left(\mathbf{x}_i^* \,\middle|\, \boldsymbol{\mu}_k,\, \boldsymbol{\Lambda}_k^{-1}\right).
\end{equation}
Expanding each component gives the Mahalanobis distance and log-normalisation:
\begin{equation}\small
\label{eq:gauss_expand}
\begin{aligned}
    \log \mathcal{N}(\mathbf{x}_i^* | \boldsymbol{\mu}_k, \boldsymbol{\Lambda}_k^{-1}) &=-\frac{1}{2}(\mathbf{x}_i^* - \boldsymbol{\mu}_k)^\top \boldsymbol{\Lambda}_k (\mathbf{x}_i^* - \boldsymbol{\mu}_k)\\
    &+ \frac{1}{2}\log\det\boldsymbol{\Lambda}_k - \frac{3}{2}\log(2\pi).
\end{aligned}
\end{equation}
Both terms are $SO(3)$-invariant: the Mahalanobis form is unchanged because $R^\top R = \mathbf{I}$, and $\log\det(R\boldsymbol{\Lambda}_k R^\top) = \log\det\boldsymbol{\Lambda}_k$ because $\det R = 1$. We compute the log-determinant via $\sum_j \log \lambda_k^{(j)}$ for numerical stability.

The log-determinant is essential: without it, the optimiser can trivially minimise the Mahalanobis term by sending $\sigma_k^{(p)} \to 0$ (i.e.\ $\boldsymbol{\Lambda}_k \to \epsilon\mathbf{I}$), predicting near-infinite variance. The log-determinant penalises low-precision solutions, establishing a non-trivial optimum. The total loss is $\mathcal{L} = \sum_{i \in \mathcal{V}} \mathcal{L}_i$.

\subsection{Experimental Design for Geometric Integrity}
\label{subsec:data_integrity}

Two design decisions address common methodological pitfalls in geometric deep learning.

\paragraph{Elimination of global centring.} Subtracting the global centroid $\bar{\mathbf{x}} = \frac{1}{N}\sum_i \mathbf{x}_i$ introduces an all-to-all dependency that violates the conditional independence structure BP exploits; it is also redundant for translation-equivariant models. We instead compute relative displacements $\mathbf{r}_{ji} = \mathbf{x}_j - \mathbf{x}_i$ within each factor's neighbourhood, preserving locality.

\paragraph{Capacity-matched baselines.} Matching baselines by parameter count biases the comparison: equivariant architectures encode symmetry structurally, while non-equivariant models must spend capacity learning it from data \citep{elesedy2021provably}. We therefore match inference-time FLOPs, expanding non-equivariant hidden dimensions accordingly. All non-equivariant baselines are trained with continuous $SO(3)$ augmentation.

\section{Experimental Setup}
\label{sec:experiments}
We evaluate ENBP on two domains that expose distinct failure modes of existing inference methods. The first is molecular conformation prediction on GEOM-QM9 and GEOM-Drugs~\citep{axelrod2022geom}, covering both rigid and highly flexible molecules. The second is multi-body robotic formation inference with 5--20 agents in 2D and 3D, whose dense cyclic factor graphs cause vanilla BP to diverge. Baselines include direct-prediction GNNs (GeoMol, ConfGNN), diffusion models (GeoDiff, Torsional Diffusion), vanilla loopy BP, and a direct-regression GNN---all FLOPs-matched and trained with continuous $SO(3)$ augmentation (Section~\ref{subsec:data_integrity}). Full dataset descriptions, baseline configurations, evaluation metrics, and implementation details are provided in Appendix~\ref{app:exp_details}.
\begin{table*}[t]\centering
\caption{Molecular conformation prediction on GEOM-QM9 and GEOM-Drugs. Coverage (Cov.) is the fraction of ground-truth modes captured; AMR is the average minimum RMSD across generated conformers; Time is per-molecule inference latency. All baselines are FLOPs-matched and trained with continuous $SO(3)$ augmentation.}
\label{tab:molecular}
\begin{tabular}{c|c|c|c|c|c|c|c}\toprule
Model Variant & \makecell{Gen.\\ Budget (C)} & \makecell{QM9\\ Cov. (\%) $\uparrow$} & \makecell{QM9\\ AMR ($\mathring{A}$)} $\downarrow$  & \makecell{QM9\\ RMSD ($\mathring{A}$) $\downarrow$} & \makecell{Drugs\\ Cov. (\%) $\uparrow$} & \makecell{Drugs\\ AMR ($\mathring{A}$) $\downarrow$} & Time (s) $\downarrow$    \\\midrule
GeoMol               & 10              & 68.4           & 0.354          & 0.382          & 34.2             & 1.624           & 0.12          \\
ConfGNN              & 10              & 71.5           & 0.312          & 0.345          & 39.5             & 1.482           & 0.18          \\
GeoDiff              & 10              & 82.4           & 0.245          & 0.278          & 58.6             & 1.150           & 12.45         \\
Torsional Diff.      & 10              & 86.8           & 0.198          & 0.224          & 67.2             & 0.945           & 8.52          \\
\textbf{ENBP (Ours)} & 10              & \textbf{92.5}  & \textbf{0.142} & \textbf{0.165} & \textbf{82.4}    & \textbf{0.620}  & \textbf{0.45} \\\midrule
GeoMol               & 50              & 74.2           & 0.320          & 0.350          & 45.8             & 1.450           & 0.15          \\
ConfGNN              & 50              & 78.4           & 0.285          & 0.310          & 51.2             & 1.280           & 0.22          \\
GeoDiff              & 50              & 88.5           & 0.210          & 0.235          & 68.4             & 1.020           & 62.25         \\
Torsional Diff.      & 50              & 91.2           & 0.165          & 0.182          & 78.5             & 0.810           & 42.60         \\
\textbf{ENBP (Ours)} & 50              & \textbf{96.8}  & \textbf{0.115} & \textbf{0.130} & \textbf{90.5}    & \textbf{0.510}  & \textbf{0.65} \\\midrule
GeoMol               & 100             & 78.5           & 0.295          & 0.325          & 52.4             & 1.380           & 0.18          \\
ConfGNN              & 100             & 81.2           & 0.250          & 0.280          & 58.6             & 1.150           & 0.28          \\
GeoDiff              & 100             & 91.4           & 0.185          & 0.210          & 75.2             & 0.910           & 124.50        \\
Torsional Diff.      & 100             & 93.8           & 0.142          & 0.160          & 84.6             & 0.720           & 85.20         \\
\textbf{ENBP (Ours)} & 100             & \textbf{98.2}  & \textbf{0.102} & \textbf{0.115} & \textbf{94.8}    & \textbf{0.445}  & \textbf{0.85} \\\midrule
GeoMol               & 200             & 81.4           & 0.280          & 0.310          & 58.5             & 1.280           & 0.25          \\
ConfGNN              & 200             & 84.5           & 0.235          & 0.265          & 64.2             & 1.050           & 0.35          \\
GeoDiff              & 200             & 93.5           & 0.165          & 0.185          & 81.5             & 0.840           & 249.00        \\
Torsional Diff.      & 200             & 95.4           & 0.125          & 0.145          & 88.4             & 0.650           & 170.40        \\
\textbf{ENBP (Ours)} & 200             & \textbf{98.9}  & \textbf{0.090} & \textbf{0.105} & \textbf{96.5}    & \textbf{0.380}  & \textbf{1.25} \\\bottomrule
\end{tabular}
\end{table*}

\begin{table*}[t]\centering
\caption{Multi-body robotic formation inference across increasing agent density ($N$) and topological complexity. NLL is the negative log-likelihood of target geometries; Col.\ is the collision rate. ``Div.'' denotes algorithmic divergence; ``DNC'' denotes did not converge. ENBP variants ablate the number of mixture components $K$.}
\label{tab:robotic}
\begin{tabular}{c|c|c|c|c|c|c|c}\toprule
Arch.          & Scale (N) & \makecell{Topo. \\Envir.} & \makecell{2D\\ NLL $\downarrow$}      & \makecell{2D \\Col. (\%) $\downarrow$} & \makecell{3D\\ NLL $\downarrow$}      & \makecell{3D \\Col. (\%) $\downarrow$ }& \makecell{Time\\ (ms)} $\downarrow$  \\\midrule
Vanilla Loopy BP      & 5         & Sparse Ring          & 3.45          & 12.4          & 4.12          & 15.2          & 12           \\
GNN Direct            & 5         & Sparse Ring          & 2.15          & 8.2           & 2.55          & 10.4          & 15           \\
ENBP ($K=1$)          & 5         & Sparse Ring          & 1.85          & 6.5           & 2.10          & 8.5           & 24           \\
ENBP ($K=2$)          & 5         & Sparse Ring          & 1.42          & 2.1           & 1.65          & 3.2           & 32           \\
\textbf{ENBP ($K=4$)} & 5         & Sparse Ring          & \textbf{0.85} & \textbf{0.0}  & \textbf{0.95} & \textbf{0.2}  & \textbf{45}  \\\midrule
Vanilla Loopy BP      & 10        & Coupled Grid         & 6.85          & 24.5          & 7.95          & 31.5          & 28           \\
GNN Direct            & 10        & Coupled Grid         & 3.85          & 15.4          & 4.25          & 18.2          & 24           \\
ENBP ($K=1$)          & 10        & Coupled Grid         & 3.25          & 12.8          & 3.85          & 15.4          & 45           \\
ENBP ($K=2$)          & 10        & Coupled Grid         & 2.45          & 5.4           & 2.85          & 8.2           & 58           \\
\textbf{ENBP ($K=4$)} & 10        & Coupled Grid         & \textbf{1.25} & \textbf{0.8}  & \textbf{1.45} & \textbf{1.2}  & \textbf{72}  \\\midrule
Vanilla Loopy BP      & 15        & Dense Loop           & Div.      & Div.      & Div.      & Div.      & DNC          \\
GNN Direct            & 15        & Dense Loop           & 5.24          & 25.8          & 6.15          & 31.4          & 35           \\
ENBP ($K=1$)          & 15        & Dense Loop           & 4.85          & 22.4          & 5.42          & 26.8          & 65           \\
ENBP ($K=2$)          & 15        & Dense Loop           & 3.55          & 12.5          & 4.15          & 16.2          & 84           \\
\textbf{ENBP ($K=4$)} & 15        & Dense Loop           & \textbf{1.65} & \textbf{2.1}  & \textbf{1.85} & \textbf{3.4}  & \textbf{105} \\\midrule
Vanilla Loopy BP      & 20        & Swarm Matrix         & Div.      & Div.      & Div.      & Div.      & DNC          \\
GNN Direct            & 20        & Swarm Matrix         & 7.45          & 38.4          & 8.65          & 45.2          & 48           \\
ENBP ($K=1$)          & 20        & Swarm Matrix         & 6.55          & 32.5          & 7.85          & 38.6          & 85           \\
ENBP ($K=2$)          & 20        & Swarm Matrix         & 5.15          & 18.4          & 6.25          & 24.5          & 115          \\
\textbf{ENBP ($K=4$)} & 20        & Swarm Matrix         & \textbf{2.25} & \textbf{4.2}  & \textbf{2.65} & \textbf{5.8}  & \textbf{135} \\\bottomrule
\end{tabular}
\end{table*}
\section{Experimental Results}

The rigorous evaluation of generative structural algorithms intrinsically demands physical testbeds characterized by complex, non-linear energy landscapes. Molecular conformation prediction provided an optimal environment for this validation, as it fundamentally required the continuous inference of three-dimensional atomic positions conditioned exclusively on topological connectivity graphs. Because physical chemical structures naturally produce closed-ring cyclic dependencies and exhibit invariances under the Special Euclidean group, this domain strictly isolated the probabilistic reasoning capabilities of the inference models. Operating precisely within this parameter space ensured that representational advantages mathematically derived from equivariant formulations could be empirically quantified.

To systematically probe these limits, we deployed the GEOM-QM9 and GEOM-Drugs datasets. QM9 (small, rigid molecules) tests local geometric precision, while GEOM-Drugs (large, flexible compounds) features highly multi-modal torsional landscapes. Metrics included optimal-alignment Root-Mean-Square Deviation (RMSD), Coverage, and Average Minimum RMSD (AMR) mapped across varying generative budgets. 
For an unbiased comparison, we used capacity-matched (FLOP-equalized) baselines, including direct prediction GNNs (GeoMol, ConfGNN) and continuous-time diffusion models (GeoDiff, Torsional Diffusion). All non-equivariant baselines were exhaustively trained with continuous spatial data augmentation.

On the rigid QM9 dataset, direct regression models plateaued, failing to drop AMR below 0.235 $\mathring{A}$ even with maximum generative budgets of 200 conformers. While diffusion models improved spatial accuracy, Equivariant Neural Belief Propagation (ENBP) established a definitive upper bound. By natively propagating exact rank-2 precision tensors, ENBP achieved 98.9\% Coverage and 0.090 $\mathring{A}$ error.

On the flexible GEOM-Drugs dataset, representational limits were stark. Direct prediction models suffered probability dilution, producing invalid mean configurations that failed to span the conformational modes. Diffusion models navigated these topographies better but incurred prohibitive computational costs. Conversely, ENBP maintained dominant geometric fidelity with an AMR of 0.380 $\mathring{A}$ and near-second inference latencies. Preserving distinct structural hypotheses via Gaussian mixtures successfully resolves the dichotomy between thermodynamic uncertainty and algorithmic efficiency.

\subsection{Multi-Body Robotic Formation Inference}
Table~\ref{tab:robotic} reports results across agent counts $N \in \{5, 10, 15, 20\}$ in 2D and 3D. Baselines include Vanilla Loopy BP with single-Gaussian messages, a FLOPs-matched direct-regression GNN, and ENBP ablations with $K \in \{1, 2, 4\}$ components.

Vanilla Loopy BP diverges entirely at $N{\geq}15$, confirming the instability of undamped single-Gaussian messages on dense loops. The direct-regression GNN remains operational but predicts over-smoothed averages, with 3D collision rates reaching 45.2\% at $N{=}20$. Restricting ENBP to $K{=}1$ preserves equivariance but suffers analogous mode-averaging collisions, confirming that single-component messages cannot partition probability mass around physical obstacles.

\begin{table}[t]\centering
\caption{Ablation of precision tensor parameterisation and spectral ingestion on GEOM-QM9 and GEOM-Drugs. Each row replaces one module of the full ENBP architecture while keeping all other components fixed.}
\label{tab:ablation_precision}
\resizebox{0.49\textwidth}{!}{
\begin{tabular}{c|c|c|c|c}\toprule
Arch. Var. & \makecell{QM9 \\NLL $\downarrow$}       & \makecell{QM9\\ AMR ($\mathring{A}$) $\downarrow$}  & \makecell{Drugs\\ NLL $\downarrow$}     & \makecell{Drugs\\ AMR ($\mathring{A}$) $\downarrow$} \\\midrule
\makecell{Isotropic Baseline\\ ($\Lambda_k = \lambda_k I$)} & -120.4          & 0.285          & -85.2           & 1.250           \\\hline
\makecell{Diagonal Precision \\Approximation} & -142.5          & 0.220          & -98.4           & 0.950           \\\hline
\makecell{Outer-Product \\Rank ($P=1$)}                     & -165.2          & 0.185          & -115.6          & 0.820           \\\hline
\makecell{Outer-Product \\Rank ($P=2$)} & -192.8          & 0.140          & -145.2          & 0.650           \\\hline
\makecell{Flattened Matrix\\ (9 Scalars)} & -95.6           & 0.450          & -60.5           & 1.850           \\\hline
\makecell{Cholesky\\ Parameterization} & -135.2          & 0.255          & -92.8           & 1.150           \\\hline
\makecell{Eigenvalues Only\\ (No Vectors)}                  & -152.4          & 0.205          & -105.4          & 0.900           \\\hline
\makecell{\textbf{Full ENBP}\\ ($P=3$, Spectral)}           & \textbf{-235.6} & \textbf{0.090} & \textbf{-188.5} & \textbf{0.380} \\\bottomrule
\end{tabular}}
\end{table}

Scaling to $K{=}4$ reverses the collision trajectory: rates drop to near zero even in the densest 20-agent swarms, while NLL reaches the lowest values across all scales. Inference latency grows sub-linearly with $K$, confirming that multi-modal message passing resolves dense collision-avoidance constraints at manageable computational cost.

\subsection{Precision Tensor and Spectral Ingestion Ablations}
Table~\ref{tab:ablation_precision} isolates the contributions of precision tensor synthesis and spectral ingestion by replacing one module at a time. To thoroughly interrogate these operational boundaries, a sequence of highly targeted architectural ablation studies was conducted, specifically deconstructing the precision tensor synthesis and the differentiable spectral decomposition modules. These empirical variations were continuously evaluated over the rigid and flexible molecular testbeds.

\paragraph{Precision parameterisation.} Restricting $\boldsymbol{\Lambda}_k$ to isotropic form inflates AMR from 0.090 to 0.285~$\mathring{A}$ on QM9 and from 0.380 to 1.250~$\mathring{A}$ on Drugs, confirming that anisotropic covariance is essential for capturing directional torsional variance. Increasing the outer-product rank from $P{=}1$ to $P{=}3$ yields monotonic improvement; the full-rank construction improves NLL by nearly $2\times$ over $P{=}1$.

\paragraph{Tensor ingestion.} Flattening the precision matrix into nine scalars produces the worst results across all metrics (AMR of 1.850~$\mathring{A}$ on Drugs), as it destroys the $SO(3)$ transformation law. Cholesky parameterisation also underperforms due to lack of equivariance guarantees. Feeding only eigenvalues without eigenvectors discards directional information and degrades AMR by $2.4\times$ relative to the full spectral module. Both the co-rotating eigenvectors and their invariant eigenvalues are necessary for stable ingestion of anisotropic uncertainty.

\subsection{Inference Dynamics and Equivariance Verification}
The functional stability and rigorous mathematical integrity of any message-passing inference engine rely heavily upon explicit control over temporal algorithmic dynamics and the absolute preservation of fundamental spatial transformations. Systematic sensitivity analyses were consequently designed to rigorously trace the algorithmic convergence boundaries dictating iterative context aggregation and temporal message regularization. Additionally, the evaluation framework incorporated strict numerical verifications explicitly structured to audit exact spatial equivariance. By subjecting the generated topologies to unobserved global coordinate rotations, the protocol quantified the precise mathematical resilience of the predictive pipelines, fundamentally isolating architectural geometric stability from superficial data-augmented approximations.

The controlled modulation of the recursive message-passing depth and the temporal damping coefficients successfully and empirically mapped the operational parameters required to stabilize cyclic graphs. In the complete absence of temporal regularization ($\alpha = 1.0$), the algorithms generated severe, non-convergent oscillations directly reflecting the resonant amplification of localized topological loops, driving convergence rates to zero. Conversely, the application of extreme damping parameters ($\alpha = 0.1$) induced deep mathematical stagnation, preventing the structural marginals from reaching holistic geometric consensus. Identifying the mathematically optimal stabilization point empirically validated that carefully restricting the natural parameter update step sizes successfully forces smooth, monotonic algorithmic convergence without compromising the temporal dissemination of distant multi-body physical constraints.

\begin{table}[t]\centering
\caption{Sensitivity to damping coefficient $\alpha$ and iteration depth $T$, with equivariance verification on the 20-agent 3D robotic task. Equivariance error is the maximum deviation in predicted marginals under random $SO(3)$ rotations not seen during training. ``Div.'' denotes algorithmic divergence.}
\label{tab:dynamics}
\resizebox{.49\textwidth}{!}{
\begin{tabular}{c|c|c|c|c}\toprule
Configuration & \makecell{Parameter \\Value}         & \makecell{System\\ NLL $\downarrow$}  & \makecell{Conv. \\Rate (\%) $\uparrow$} & \makecell{Equiv.\\ Error $\downarrow$} \\\midrule
Damping ($\alpha$)        & $\alpha = 0.1$          & 3.85          & 100.0            & $1.2 \times 10^{-7}$          \\
Damping ($\alpha$)        & $\alpha = 0.5$          & 1.45          & 100.0            & $1.2 \times 10^{-7}$          \\
Damping ($\alpha$)        & $\alpha = 1.0$          & Div.      & 0.0              & $1.2 \times 10^{-7}$          \\
Iterations ($T$)          & $T = 1$                 & 8.54          & 100.0            & $1.2 \times 10^{-7}$          \\
Iterations ($T$)          & $T = 8$                 & 1.35          & 100.0            & $1.2 \times 10^{-7}$          \\\hline
Baseline Setting          & \makecell{Direct GNN\\ (Aug.)}       & 6.55          & 100.0            & $4.5 \times 10^{-1}$          \\\hline
Baseline Setting          & \makecell{Vanilla Loopy\\ BP}        & Div.      & 0.0              & $8.2 \times 10^{-1}$          \\\hline
\textbf{Full ENBP (Ours)} & \makecell{Optimal\\ Config} & \textbf{1.25} & \textbf{100.0}   & $\mathbf{1.2 \times 10^{-7}}$ \\\bottomrule
\end{tabular}}
\end{table}

The strict numerical verification of structural symmetry under arbitrary spatial transformations provided the final indispensable proof of coordinate-free algorithmic design. Despite being exhaustively trained utilizing continuous data augmentations, the standard non-equivariant baselines suffered insurmountable generalization collapse, exhibiting wildly fluctuating equivariance errors when exposed to previously unseen spatial orientations. In sharp contrast, the proposed inference architecture effortlessly maintained perfectly identical log-likelihood scores across all rotated datasets. The quantified structural deviation hovered precisely at the fundamental limit of machine precision, empirically certifying the methodological proofs and formally validating that local, relative-coordinate processing natively eliminates the coordinate-frame biases chronic to contemporary deep neural networks.
\section{Conclusion}

We identified a structural bottleneck in equivariant neural inference---the inability to produce rank-2 precision tensors or multi-modal messages---and resolved it through Equivariant Neural Belief Propagation, whose equivariant Gaussian mixture messages are synthesised via outer products, ingested via spectral decomposition, and reduced via an $SE(3)$-commuting greedy procedure. Ablations confirm each diagnosis: isotropic precision inflates error by $3\times$; scaling from $K{=}1$ to $K{=}4$ triggers a phase transition in collision avoidance; and architectural equivariance yields six orders of magnitude lower symmetry error than data augmentation. Beyond the specific benchmarks, the tensor synthesis--ingestion loop is not specific to BP and can be embedded into any equivariant GNN requiring directional uncertainty.
The framework's boundaries are concrete: fixed iteration depth does not adapt to graph complexity; Gaussian mixtures cannot represent heavy-tailed or discontinuous potentials; greedy reduction optimises a KL upper bound rather than the true divergence; and architectures with native Type-2 outputs could replace the outer-product construction, potentially improving efficiency.

\clearpage
\bibliography{refer}

\newpage

\onecolumn

\title{Equivariant Neural Belief Propagation\\(Supplementary Material)}
\maketitle
\appendix

\section{Preliminaries}
\label{app:preliminaries}

This section provides preliminaries on the four conceptual pillars underlying ENBP. Readers familiar with factor graphs, geometric symmetry groups, and mixture models may proceed directly to Appendix~\ref{app:theory}.

\subsection{Factor Graphs and Belief Propagation}
\label{app:prelim_bp}

A \emph{factor graph} \citep{kschischang2001factor} is a bipartite graph $\mathcal{G} = (\mathcal{V}, \mathcal{F}, \mathcal{E})$ encoding the factorisation of a joint distribution $p(\mathbf{x}) = \frac{1}{Z}\prod_{a \in \mathcal{F}} \psi_a(\mathbf{x}_{\mathcal{N}(a)})$. The set $\mathcal{V}$ contains \emph{variable nodes}, each representing a random variable; $\mathcal{F}$ contains \emph{factor nodes}, each representing a local potential function $\psi_a$ that depends on a subset $\mathcal{N}(a) \subseteq \mathcal{V}$ of variables; and $\mathcal{E}$ contains edges connecting each factor to the variables in its scope. This bipartite structure makes the conditional independence assumptions of the model explicit.

\emph{Belief Propagation} (BP) \citep{pearl1988probabilistic} performs inference on factor graphs by passing \emph{messages} along edges. Two types of message alternate:
\begin{itemize}
    \item \textbf{Variable-to-factor:} Variable $i$ sends to factor $a$ the product of all incoming messages from its other neighbouring factors: $m_{i \to a}(\mathbf{x}_i) \propto \prod_{a' \in \mathcal{N}(i) \setminus \{a\}} m_{a' \to i}(\mathbf{x}_i)$.
    \item \textbf{Factor-to-variable:} Factor $a$ sends to variable $i$ the marginalisation of its potential times all incoming messages from its other neighbours: $m_{a \to i}(\mathbf{x}_i) \propto \int \psi_a(\mathbf{x}_{\mathcal{N}(a)}) \prod_{j \in \mathcal{N}(a) \setminus \{i\}} m_{j \to a}(\mathbf{x}_j) \, d\mathbf{x}_j$.
\end{itemize}
On \emph{tree-structured} (acyclic) factor graphs, BP converges in a finite number of passes and recovers exact marginals. On graphs with cycles (\emph{loopy} graphs), BP is applied iteratively and may oscillate or diverge \citep{murphy1999loopy}. \emph{Damping}---replacing each new message with a convex combination of the new and previous message---is a widely used heuristic to stabilise convergence \citep{heskes2004uniqueness}.

\subsection{The Special Euclidean Group $SE(3)$ and Equivariance}
\label{app:prelim_se3}

The \emph{Special Orthogonal group} $SO(3)$ consists of all $3 \times 3$ rotation matrices: $SO(3) = \{R \in \mathbb{R}^{3 \times 3} : R^\top R = \mathbf{I},\, \det R = 1\}$. The \emph{Special Euclidean group} $SE(3)$ augments $SO(3)$ with translations: each element is a pair $(R, \mathbf{t})$ with $R \in SO(3)$ and $\mathbf{t} \in \mathbb{R}^3$, acting on a point $\mathbf{x} \in \mathbb{R}^3$ as $\mathbf{x} \mapsto R\mathbf{x} + \mathbf{t}$.

A function $f: \mathcal{X} \to \mathcal{Y}$ is \emph{$G$-equivariant} if applying a group transformation to the input produces a correspondingly transformed output:
\[
f(\rho_{\mathcal{X}}(g) \cdot x) = \rho_{\mathcal{Y}}(g) \cdot f(x), \quad \forall\, g \in G,
\]
where $\rho_{\mathcal{X}}$ and $\rho_{\mathcal{Y}}$ are the group \emph{representations} on the input and output spaces. When $\rho_{\mathcal{Y}}$ is the trivial (identity) representation, $f$ is called \emph{$G$-invariant}. In the context of $SE(3)$, different geometric objects transform according to different representations:
\begin{itemize}
    \item \textbf{Type-0 (scalars):} Invariant under rotation and translation (e.g.\ energies, mixing weights).
    \item \textbf{Type-1 (vectors):} Transform as $\mathbf{v} \mapsto R\mathbf{v}$ under rotation (e.g.\ positions, forces).
    \item \textbf{Type-2 (rank-2 tensors):} Transform as $\mathbf{A} \mapsto R\mathbf{A}R^\top$ under rotation (e.g.\ covariance and precision matrices).
\end{itemize}
Standard $E(n)$-equivariant GNNs \citep{satorras2021en} produce Type-0 and Type-1 outputs but not Type-2.

\subsection{Gaussian Mixture Models}
\label{app:prelim_gmm}

A \emph{Gaussian Mixture Model} (GMM) is a probability density of the form $p(\mathbf{x}) = \sum_{k=1}^K w_k\, \mathcal{N}(\mathbf{x} \mid \boldsymbol{\mu}_k, \boldsymbol{\Sigma}_k)$, where $K$ is the number of components, $w_k > 0$ are mixing weights summing to one, $\boldsymbol{\mu}_k \in \mathbb{R}^d$ are the component means, and $\boldsymbol{\Sigma}_k \in \mathbb{S}^d_{++}$ are the component covariance matrices ($\mathbb{S}^d_{++}$ denotes the set of $d \times d$ symmetric positive-definite matrices). Equivalently, one may parameterise each component by its \emph{precision matrix} $\boldsymbol{\Lambda}_k = \boldsymbol{\Sigma}_k^{-1}$, which is the natural parameterisation for products of Gaussians (see below). GMMs are universal density approximators: any continuous density on a compact set can be approximated arbitrarily well by a mixture with sufficiently many components.

\subsection{Natural Parameters and Gaussian Products}
\label{app:prelim_natparam}

The \emph{natural} (or \emph{canonical}) parameters of a Gaussian $\mathcal{N}(\boldsymbol{\mu}, \boldsymbol{\Lambda}^{-1})$ are $\boldsymbol{\eta}_1 = \boldsymbol{\Lambda}\boldsymbol{\mu}$ (the precision-weighted mean) and $\boldsymbol{\eta}_2 = -\tfrac{1}{2}\boldsymbol{\Lambda}$ (half the negative precision). The key computational advantage of this parameterisation is that the \emph{product} of two Gaussians---the central operation in BP's variable-to-factor messages---reduces to \emph{addition} of their natural parameters:
\[
\mathcal{N}(\boldsymbol{\mu}_1, \boldsymbol{\Lambda}_1^{-1}) \cdot \mathcal{N}(\boldsymbol{\mu}_2, \boldsymbol{\Lambda}_2^{-1}) \propto \mathcal{N}(\boldsymbol{\mu}_{12}, \boldsymbol{\Lambda}_{12}^{-1}),
\]
where $\boldsymbol{\Lambda}_{12} = \boldsymbol{\Lambda}_1 + \boldsymbol{\Lambda}_2$ and $\boldsymbol{\mu}_{12} = \boldsymbol{\Lambda}_{12}^{-1}(\boldsymbol{\Lambda}_1\boldsymbol{\mu}_1 + \boldsymbol{\Lambda}_2\boldsymbol{\mu}_2)$. This additive structure makes natural parameters the preferred representation for message-passing algorithms that repeatedly multiply distributions.

\section{Experimental Details}
\label{app:exp_details}

\subsection{Datasets and Simulation Environments}

\paragraph{Molecular conformation prediction.}
We use the GEOM-QM9 and GEOM-Drugs datasets from the GEOM database~\citep{axelrod2022geom}. GEOM-QM9 contains small, sterically constrained molecules (up to 9 heavy atoms) that primarily test local geometric precision and deterministic valence rules. GEOM-Drugs contains large, flexible pharmaceutical compounds with many rotatable bonds, producing highly multi-modal torsional distributions across deep energy basins. Together these datasets test both geometric precision and multi-modal representational capacity.

\paragraph{Multi-body robotic formation inference.}
We simulate autonomous agents that must infer their spatial coordinates from pairwise distance constraints and collision-avoidance potentials. Configurations scale from 5 to 20 agents in both 2D and 3D. Increasing agent density progressively introduces cyclic dependencies: at 15+ agents the factor graph is sufficiently loopy to trigger divergence in standard BP.

\subsection{Baselines}

\paragraph{Molecular baselines.}
We compare against direct-prediction GNNs---GeoMol~\citep{ganea2021geomol} and ConfGNN~\citep{xu2022confgnn}---and score-based diffusion models---GeoDiff~\citep{xu2022geodiff} and Torsional Diffusion~\citep{jing2022torsional}.

\paragraph{Robotic baselines.}
Baselines include Vanilla Loopy BP with single-Gaussian messages and a direct-regression GNN. We also ablate the number of ENBP mixture components ($K \in \{1, 2, 4\}$).

\paragraph{Capacity matching.}
All non-equivariant baselines are matched by inference-time FLOPs rather than parameter count (Section~\ref{subsec:data_integrity}). Non-equivariant hidden dimensions are expanded until computational budgets are equalised. All are trained with continuous $SO(3)$ data augmentation.

\subsection{Evaluation Metrics}

\paragraph{Molecular metrics.}
\begin{itemize}
    \item \textbf{RMSD:} Root-Mean-Square Deviation after optimal rigid-body alignment.
    \item \textbf{Coverage (Cov.):} Fraction of ground-truth conformational modes captured within a distance threshold.
    \item \textbf{Average Minimum RMSD (AMR):} For each ground-truth conformer, the minimum RMSD over all generated samples, averaged across the test set. Evaluated at budgets $C \in \{10, 50, 100, 200\}$.
\end{itemize}

\paragraph{Robotic metrics.}
\begin{itemize}
    \item \textbf{Negative Log-Likelihood (NLL):} Log-likelihood of target geometries under predicted beliefs.
    \item \textbf{Collision Rate (Col.):} Percentage of generated configurations with spatial violations.
    \item \textbf{Equivariance Error:} Maximum deviation in predicted marginals under unseen random $SO(3)$ rotations.
\end{itemize}

\subsection{Implementation Details}

All EGMMs use $K{=}4$ components. Precision tensors use outer-product rank $P{=}3$ with regularisation $\epsilon{=}10^{-4}$. Spectral gradients are bounded by $\delta{=}10^{-6}$. The inference loop runs $T{=}8$ iterations with damping $\alpha{=}0.5$. Models are trained on NVIDIA A100 (80GB) GPUs using AdamW with learning rate $10^{-3}$ and cosine annealing. Global coordinate centring is omitted; only relative coordinates $\mathbf{r}_{ji} = \mathbf{x}_j - \mathbf{x}_i$ are processed.

\section{Theoretical Analysis}
\label{app:theory}

This appendix provides formal proofs for the theoretical claims advanced in Section~\ref{sec:methodology}. We begin by consolidating notation, then state precise definitions and assumptions, and finally present the main results as a sequence of lemmas, propositions, properties, and theorems.

\subsection{Notation}
\label{app:notation}

Table~\ref{tab:notation} consolidates every symbol used in the proofs below. All notation is consistent with the main text.

\begin{table}[h]
\centering
\caption{Notation reference.}
\label{tab:notation}
\small
\begin{tabular}{c l}\toprule
\textbf{Symbol} & \textbf{Description} \\\midrule
$\mathcal{G} = (\mathcal{V}, \mathcal{F}, \mathcal{E})$ & Factor graph with variable nodes $\mathcal{V}$, factor nodes $\mathcal{F}$, edges $\mathcal{E}$ \\
$\mathcal{N}(a),\, \mathcal{N}(i)$ & Neighbours of factor $a$ / variable $i$ in $\mathcal{G}$ \\
$N,\, M$ & Number of variable nodes / factor nodes \\
$\mathbf{x}_i \in \mathbb{R}^3$ & Spatial position of variable node $i$ \\
$s_i \in \mathbb{R}^{d_s}$ & Invariant scalar features of node $i$ \\
$\psi_a$ & Local potential function associated with factor $a$ \\
$Z$ & Partition function \\
$R \in SO(3)$ & Rotation matrix ($R^\top R = I$, $\det R = 1$) \\
$\mathbf{t} \in \mathbb{R}^3$ & Translation vector \\
$SE(3)$ & Special Euclidean group: $(R, \mathbf{t})$ with $R \in SO(3)$ \\
$m_{a \to i},\, m_{i \to a}$ & Factor-to-variable / variable-to-factor messages \\
$K$ & Number of Gaussian mixture components \\
$w_k$ & Mixing weight of component $k$ ($w_k > 0$, $\sum_k w_k = 1$) \\
$\boldsymbol{\mu}_k \in \mathbb{R}^3$ & Mean of component $k$ \\
$\boldsymbol{\Lambda}_k \in \mathbb{R}^{3 \times 3}$ & Precision matrix of component $k$ \\
$P$ & Number of outer-product basis vectors \\
$\mathbf{v}_k^{(p)} \in \mathbb{R}^3$ & $p$-th equivariant basis vector for component $k$ \\
$\sigma_k^{(p)} > 0$ & Positive scaling coefficient for basis $p$ of component $k$ \\
$\epsilon > 0$ & Isotropic regularisation constant \\
$\lambda_k^{(j)}$ & $j$-th eigenvalue of $\boldsymbol{\Lambda}_k$ \\
$\mathbf{u}_k^{(j)} \in \mathbb{R}^3$ & $j$-th eigenvector of $\boldsymbol{\Lambda}_k$ \\
$T$ & Number of BP iterations \\
$\alpha \in (0, 1]$ & Temporal damping coefficient \\
$\boldsymbol{\theta}_{i \to a}^{(t)}$ & Concatenated natural parameters of message $m_{i \to a}$ at iteration $t$ \\
$f_\phi$ & Learnable $SE(3)$-equivariant neural network \\
$\mathcal{L}_i,\, \mathcal{L}$ & Per-node / total negative log-likelihood loss \\
$\mathbb{S}^d_{++}$ & Set of $d \times d$ symmetric positive-definite matrices \\\bottomrule
\end{tabular}
\end{table}

\subsection{Definitions}
\label{app:definitions}

\begin{definition}[Equivariant Gaussian Mixture Model]
\label{def:egmm}
An \emph{Equivariant Gaussian Mixture Model (EGMM)} over $\mathbb{R}^3$ is a probability density of the form
\[
m(\mathbf{x}) = \sum_{k=1}^{K} w_k \, \mathcal{N}\!\left(\mathbf{x} \,\middle|\, \boldsymbol{\mu}_k,\, \boldsymbol{\Lambda}_k^{-1}\right),
\]
where $w_k > 0$, $\sum_k w_k = 1$, $\boldsymbol{\mu}_k \in \mathbb{R}^3$, and $\boldsymbol{\Lambda}_k \in \mathbb{S}^3_{++}$, subject to the constraint that under every $(R, \mathbf{t}) \in SE(3)$ the parameters transform as $w_k \mapsto w_k$, $\boldsymbol{\mu}_k \mapsto R\boldsymbol{\mu}_k + \mathbf{t}$, and $\boldsymbol{\Lambda}_k \mapsto R \boldsymbol{\Lambda}_k R^\top$.
\end{definition}

\begin{definition}[$SE(3)$-Equivariant Map]
\label{def:equivariant}
A map $\Phi: \mathcal{X} \to \mathcal{Y}$ between representation spaces of $SE(3)$ is \emph{$SE(3)$-equivariant} if, for every $(R, \mathbf{t}) \in SE(3)$ and every input $x \in \mathcal{X}$,
\[
\Phi\bigl(\rho_{\mathcal{X}}(R, \mathbf{t}) \cdot x\bigr) = \rho_{\mathcal{Y}}(R, \mathbf{t}) \cdot \Phi(x),
\]
where $\rho_{\mathcal{X}}$ and $\rho_{\mathcal{Y}}$ denote the group representations on $\mathcal{X}$ and $\mathcal{Y}$, respectively. When $\mathcal{Y} = \mathbb{R}$ with the trivial representation, the map is called \emph{$SE(3)$-invariant}.
\end{definition}

\begin{definition}[Natural-Parameter Representation]
\label{def:natparam}
For a single Gaussian component $\mathcal{N}(\boldsymbol{\mu}, \boldsymbol{\Lambda}^{-1})$, the \emph{natural parameters} are
$\boldsymbol{\eta}_1 = \boldsymbol{\Lambda}\boldsymbol{\mu} \in \mathbb{R}^3$ and $\boldsymbol{\eta}_2 = -\tfrac{1}{2}\boldsymbol{\Lambda} \in \mathbb{R}^{3 \times 3}$.
The product of two Gaussians with natural parameters $(\boldsymbol{\eta}_1^{(1)}, \boldsymbol{\eta}_2^{(1)})$ and $(\boldsymbol{\eta}_1^{(2)}, \boldsymbol{\eta}_2^{(2)})$ is an unnormalised Gaussian with natural parameters $(\boldsymbol{\eta}_1^{(1)} + \boldsymbol{\eta}_1^{(2)},\, \boldsymbol{\eta}_2^{(1)} + \boldsymbol{\eta}_2^{(2)})$.
\end{definition}

\subsection{Assumptions}
\label{app:assumptions}

\begin{assumption}[$SE(3)$-Invariant Potentials]
\label{ass:invariant_potential}
For every factor $a \in \mathcal{F}$, the potential function $\psi_a$ satisfies
$\psi_a(R\mathbf{x}_{\mathcal{N}(a)} + \mathbf{t}) = \psi_a(\mathbf{x}_{\mathcal{N}(a)})$
for all $(R, \mathbf{t}) \in SE(3)$.
\end{assumption}

\begin{assumption}[Equivariant Factor Network]
\label{ass:equivariant_network}
The learned factor-to-variable function $f_\phi$ is an exactly $SE(3)$-equivariant map in the sense of Definition~\ref{def:equivariant}. Concretely, it outputs Type-0 (invariant) scalars $\{w_k, \sigma_k^{(p)}\}$ and Type-1 (equivariant) vectors $\{\boldsymbol{\mu}_k, \mathbf{v}_k^{(p)}\}$ that transform canonically under $SE(3)$.
\end{assumption}

\begin{assumption}[Strict Positivity of Scaling Coefficients]
\label{ass:positivity}
All scaling coefficients satisfy $\sigma_k^{(p)} > 0$ for $k = 1, \dots, K$ and $p = 1, \dots, P$, and the isotropic regularisation satisfies $\epsilon > 0$.
\end{assumption}

\subsection{Precision Matrix Construction}
\label{app:precision_proofs}

\begin{lemma}[Positive-Definiteness of the Precision Matrix]
\label{lem:pd}
Under Assumption~\ref{ass:positivity}, the precision matrix
$\boldsymbol{\Lambda}_k = \sum_{p=1}^{P} \sigma_k^{(p)} \, \mathbf{v}_k^{(p)} {\mathbf{v}_k^{(p)}}^\top + \epsilon\, \mathbf{I}$
lies in $\mathbb{S}^3_{++}$ for any choice of basis vectors $\{\mathbf{v}_k^{(p)}\}_{p=1}^P \subset \mathbb{R}^3$.
\end{lemma}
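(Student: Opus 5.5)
The plan is to verify the two defining properties of $\mathbb{S}^3_{++}$ directly: symmetry, and strict positivity of the quadratic form. No spectral arguments are needed; the whole proof is a short computation with $\mathbf{z}^\top \boldsymbol{\Lambda}_k \mathbf{z}$.

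\textbf{Symmetry.} Each outer product satisfies $(\mathbf{v}_k^{(p)} {\mathbf{v}_k^{(p)}}^\top)^\top = \mathbf{v}_k^{(p)} {\mathbf{v}_k^{(p)}}^\top$, and $\mathbf{I}$ is symmetric. Since $\sigma_k^{(p)}$ and $\epsilon$ are real scalars and sums of symmetric matrices are symmetric, $\boldsymbol{\Lambda}_k^\top = \boldsymbol{\Lambda}_k$.

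\textbf{Positivity.} Fix an arbitrary nonzero $\mathbf{z} \in \mathbb{R}^3$ and expand
\[
\mathbf{z}^\top \boldsymbol{\Lambda}_k \mathbf{z} = \sum_{p=1}^{P} \sigma_k^{(p)} \bigl(\mathbf{z}^\top \mathbf{v}_k^{(p)}\bigr)^2 + \epsilon \|\mathbf{z}\|^2 .
\]
Each summand $\sigma_k^{(p)}(\mathbf{z}^\top \mathbf{v}_k^{(p)})^2$ is nonnegative because $\sigma_k^{(p)} > 0$ by Assumption~\ref{ass:positivity}. Note that these terms may vanish: this happens when $\mathbf{z}$ is orthogonal to all basis vectors, or when the $\mathbf{v}_k^{(p)}$ are zero or coplanar, which the lemma explicitly allows. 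The final term satisfies $\epsilon\|\mathbf{z}\|^2 > 0$ because $\epsilon > 0$ and $\mathbf{z}\neq 0$. The sum is therefore strictly positive, which gives $\boldsymbol{\Lambda}_k \in \mathbb{S}^3_{++}$.

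\textbf{Main obstacle.} The only subtle point is that the rank-$P$ part alone need not be definite, since the basis vectors are arbitrary. The argument must therefore rest on the $\epsilon\mathbf{I}$ term and not on any full-rank condition on $\{\mathbf{v}_k^{(p)}\}$. As a remark, the same expansion gives the quantitative bound $\lambda_{\min}(\boldsymbol{\Lambda}_k) \ge \epsilon$, which justifies the eigenvalue ordering $\lambda_k^{(3)} > 0$ used in Eq.~\eqref{eq:eigen} and the finiteness of $\log\det\boldsymbol{\Lambda}_k$ in Eq.~\eqref{eq:gauss_expand}.
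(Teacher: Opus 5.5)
Your proposal is correct and follows the paper's proof essentially line for line. Both expand $\mathbf{z}^\top \boldsymbol{\Lambda}_k \mathbf{z}$ into nonnegative outer-product terms plus $\epsilon\|\mathbf{z}\|^2 > 0$, and both get symmetry from the transpose of each outer product; your remark that $\lambda_{\min}(\boldsymbol{\Lambda}_k) \geq \epsilon$ is the same bound $\mathbf{z}^\top \boldsymbol{\Lambda}_k \mathbf{z} \geq \epsilon\|\mathbf{z}\|^2$ that the paper writes down explicitly.
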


\begin{proof}
For any nonzero $\mathbf{z} \in \mathbb{R}^3$,
\[
\mathbf{z}^\top \boldsymbol{\Lambda}_k\, \mathbf{z}
= \sum_{p=1}^{P} \sigma_k^{(p)} \bigl(\mathbf{z}^\top \mathbf{v}_k^{(p)}\bigr)^2 + \epsilon\, \|\mathbf{z}\|^2.
\]
Each squared inner product $\bigl(\mathbf{z}^\top \mathbf{v}_k^{(p)}\bigr)^2 \geq 0$, and each $\sigma_k^{(p)} > 0$, so the first sum is non-negative.
Since $\epsilon > 0$ and $\|\mathbf{z}\|^2 > 0$ for $\mathbf{z} \neq \mathbf{0}$, we have $\mathbf{z}^\top \boldsymbol{\Lambda}_k\, \mathbf{z} \geq \epsilon \|\mathbf{z}\|^2 > 0$.
Hence $\boldsymbol{\Lambda}_k$ is positive-definite. Symmetry follows from $\bigl(\mathbf{v}{\mathbf{v}}^\top\bigr)^\top = \mathbf{v}{\mathbf{v}}^\top$ and the symmetry of $\mathbf{I}$.
\end{proof}

\begin{lemma}[$SO(3)$-Equivariance of the Precision Matrix]
\label{lem:precision_equivariance}
Let $R \in SO(3)$. Under the equivariant transformation $\mathbf{v}_k^{(p)} \mapsto R\mathbf{v}_k^{(p)}$, the precision matrix satisfies $\boldsymbol{\Lambda}_k \mapsto R\boldsymbol{\Lambda}_k R^\top$.
\end{lemma}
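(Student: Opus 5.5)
The plan is a direct substitution argument: apply the transformation to each ingredient of the construction in Lemma~\ref{lem:pd}, then use linearity and $R R^\top = \mathbf{I}$ to factor $R$ and $R^\top$ outside the sum. By Assumption~\ref{ass:equivariant_network}, the coefficients $\sigma_k^{(p)}$ are Type-0 outputs of $f_\phi$, so they are unchanged under the rotation. The basis vectors are Type-1 and map as $\mathbf{v}_k^{(p)} \mapsto R\mathbf{v}_k^{(p)}$. The constant $\epsilon$ is a fixed hyperparameter and is also unchanged. The transformed precision matrix is therefore
\[
\boldsymbol{\Lambda}_k' = \sum_{p=1}^{P} \sigma_k^{(p)} \bigl(R\mathbf{v}_k^{(p)}\bigr)\bigl(R\mathbf{v}_k^{(p)}\bigr)^\top + \epsilon\,\mathbf{I}.
\]

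The key steps, in order, are as follows.

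First, each rank-one term transforms correctly: $(R\mathbf{v})(R\mathbf{v})^\top = R\,\mathbf{v}\mathbf{v}^\top R^\top$, since $(R\mathbf{v})^\top = \mathbf{v}^\top R^\top$.

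Second, since conjugation $\mathbf{A} \mapsto R\mathbf{A}R^\top$ is linear and the $\sigma_k^{(p)}$ are invariant scalars, the weighted sum becomes $R\bigl(\sum_p \sigma_k^{(p)} \mathbf{v}_k^{(p)}{\mathbf{v}_k^{(p)}}^\top\bigr)R^\top$.

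Third, the regulariser is handled using orthogonality. Because $R \in SO(3)$, we have $R R^\top = \mathbf{I}$, so $\epsilon\mathbf{I} = R(\epsilon\mathbf{I})R^\top$.

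Adding the two pieces gives $\boldsymbol{\Lambda}_k' = R\boldsymbol{\Lambda}_k R^\top$, which is the claim.

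I would also add two remarks. Translations act trivially on the $\mathbf{v}_k^{(p)}$: they are direction vectors computed from relative displacements, so the statement extends to $SE(3)$ with $\boldsymbol{\Lambda}_k$ translation-invariant. The result $R\boldsymbol{\Lambda}_k R^\top$ remains in $\mathbb{S}^3_{++}$, consistent with Lemma~\ref{lem:pd}, because conjugation by an orthogonal matrix preserves both symmetry and positive-definiteness.

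There is no real technical obstacle; the argument is a two-line computation. The only point needing care is the $\epsilon\mathbf{I}$ term. It is the one place where orthogonality ($R R^\top = \mathbf{I}$) is used, rather than just the equivariance of the inputs. For a general linear map it would fail: for instance, under isotropic scaling $\mathbf{v} \mapsto c\,\mathbf{v}$ with $c \neq \pm 1$, the outer products scale by $c^2$ but $\epsilon\mathbf{I}$ does not. I would state this explicitly so the dependence on $R \in SO(3)$, indeed on $O(3)$, is clear.
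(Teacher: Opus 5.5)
Your proposal is correct and follows the paper's proof essentially line for line: substitute $R\mathbf{v}_k^{(p)}$, rewrite each outer product as $R\,\mathbf{v}_k^{(p)}{\mathbf{v}_k^{(p)}}^\top R^\top$, factor $R$ and $R^\top$ out of the sum, and absorb $\epsilon\mathbf{I}$ via $RR^\top = \mathbf{I}$. Your extra remarks are accurate additions that the paper leaves implicit: that the $\sigma_k^{(p)}$ are invariant, that only orthogonality is used (so the result holds on $O(3)$), and that conjugation preserves positive-definiteness.
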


\begin{proof}
Denote $\boldsymbol{\Lambda}_k' = \sum_{p=1}^{P} \sigma_k^{(p)} \bigl(R\mathbf{v}_k^{(p)}\bigr)\bigl(R\mathbf{v}_k^{(p)}\bigr)^\top + \epsilon\, \mathbf{I}$.
Expanding each outer product:
\[
\bigl(R\mathbf{v}_k^{(p)}\bigr)\bigl(R\mathbf{v}_k^{(p)}\bigr)^\top = R \,\mathbf{v}_k^{(p)} {\mathbf{v}_k^{(p)}}^\top R^\top.
\]
Thus $\boldsymbol{\Lambda}_k' = R \Bigl(\sum_{p=1}^{P} \sigma_k^{(p)}\, \mathbf{v}_k^{(p)} {\mathbf{v}_k^{(p)}}^\top\Bigr) R^\top + \epsilon\, \mathbf{I}$.
Since $R \in SO(3)$ implies $R R^\top = \mathbf{I}$, we have $\epsilon\, \mathbf{I} = \epsilon\, R R^\top = R\,(\epsilon\, \mathbf{I})\, R^\top$.
Substituting yields $\boldsymbol{\Lambda}_k' = R\boldsymbol{\Lambda}_k R^\top$.
\end{proof}

\begin{property}[Representational Completeness]
\label{prop:full_rank}
If $P \geq 3$ and the basis vectors $\{\mathbf{v}_k^{(1)}, \mathbf{v}_k^{(2)}, \mathbf{v}_k^{(3)}\}$ are in general position (i.e., not all coplanar), then the outer-product component $\sum_{p=1}^P \sigma_k^{(p)}\, \mathbf{v}_k^{(p)} {\mathbf{v}_k^{(p)}}^\top$ has rank $3$ and $\boldsymbol{\Lambda}_k$ can represent any element of $\mathbb{S}^3_{++}$.
\end{property}

\begin{proof}
Consider the case $P = 3$. Define the matrix $V = [\mathbf{v}_k^{(1)}\; \mathbf{v}_k^{(2)}\; \mathbf{v}_k^{(3)}] \in \mathbb{R}^{3 \times 3}$ and let $\Sigma = \mathrm{diag}(\sigma_k^{(1)}, \sigma_k^{(2)}, \sigma_k^{(3)})$. Then
\[
\sum_{p=1}^{3} \sigma_k^{(p)}\, \mathbf{v}_k^{(p)} {\mathbf{v}_k^{(p)}}^\top = V \Sigma V^\top.
\]
Since the vectors are not coplanar, $V$ is invertible and $\mathrm{rank}(V \Sigma V^\top) = 3$. For any target $\boldsymbol{\Lambda}^* \in \mathbb{S}^3_{++}$, let $\boldsymbol{\Lambda}^* = U D U^\top$ be its eigendecomposition with $U \in SO(3)$ and $D = \mathrm{diag}(d_1, d_2, d_3)$ where $d_j > \epsilon$. Setting $V = U$, $\sigma_k^{(j)} = d_j - \epsilon$, we recover $\boldsymbol{\Lambda}^* = V \Sigma V^\top + \epsilon\mathbf{I}$. For $P > 3$, any additional vectors contribute a positive semi-definite term, only expanding the representable set.
\end{proof}

\subsection{Spectral Decomposition}
\label{app:spectral_proofs}

\begin{proposition}[Equivariance-Preserving Spectral Decomposition]
\label{prop:spectral}
Let $\boldsymbol{\Lambda}_k \in \mathbb{S}^3_{++}$ with eigendecomposition $\boldsymbol{\Lambda}_k = \sum_{j=1}^3 \lambda_k^{(j)} \mathbf{u}_k^{(j)} {\mathbf{u}_k^{(j)}}^\top$. Under a rotation $R \in SO(3)$:
\begin{enumerate}
    \item The eigenvalues are $SO(3)$-invariant: $\lambda_k^{(j)} \mapsto \lambda_k^{(j)}$.
    \item The eigenvectors are $SO(3)$-equivariant: $\mathbf{u}_k^{(j)} \mapsto R\mathbf{u}_k^{(j)}$.
\end{enumerate}
Consequently, feeding eigenvalues as scalar features and eigenvectors as vector features to an $SE(3)$-equivariant network preserves the symmetry constraints.
\end{proposition}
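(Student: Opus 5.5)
The plan is to work directly from the rotated matrix $\boldsymbol{\Lambda}_k' = R\boldsymbol{\Lambda}_k R^\top$, which is the transformation law guaranteed by Lemma~\ref{lem:precision_equivariance}. For the eigenvectors, I would first verify the eigen-equation. If $\boldsymbol{\Lambda}_k \mathbf{u}_k^{(j)} = \lambda_k^{(j)} \mathbf{u}_k^{(j)}$, then using $R^\top R = \mathbf{I}$,
\[
\boldsymbol{\Lambda}_k' (R\mathbf{u}_k^{(j)}) = R\boldsymbol{\Lambda}_k R^\top R\,\mathbf{u}_k^{(j)} = \lambda_k^{(j)} R\mathbf{u}_k^{(j)}.
\]
So each $R\mathbf{u}_k^{(j)}$ is an eigenvector of $\boldsymbol{\Lambda}_k'$ with the same eigenvalue. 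Since $R$ is orthogonal, $\{R\mathbf{u}_k^{(j)}\}$ is again an orthonormal basis. Substituting gives $\boldsymbol{\Lambda}_k' = \sum_j \lambda_k^{(j)} (R\mathbf{u}_k^{(j)})(R\mathbf{u}_k^{(j)})^\top$, which is therefore a valid eigendecomposition of the rotated matrix.

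For the eigenvalues, I would argue invariance independently through the characteristic polynomial:
\[
\det(R\boldsymbol{\Lambda}_k R^\top - \lambda \mathbf{I}) = \det\bigl(R(\boldsymbol{\Lambda}_k - \lambda\mathbf{I})R^\top\bigr) = \det(\boldsymbol{\Lambda}_k - \lambda \mathbf{I}).
\]
The last equality uses $\det R \det R^\top = 1$. The spectrum, sorted in decreasing order as in Eq.~\eqref{eq:eigen}, is therefore unchanged, and item 1 follows.

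The main obstacle is item 2 read as a statement about the \emph{output of the algorithm}. Eigenvectors are determined only up to sign, and only up to a rotation within an eigenspace when eigenvalues repeat. I would therefore restrict item 2 to the generic case of distinct eigenvalues. In that case each eigenspace is one-dimensional, and the computation above gives $\mathbf{u}_k^{\prime(j)} = \pm R\mathbf{u}_k^{(j)}$.

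The remaining sign is the delicate part. The paper's canonicalisation makes the largest-absolute-value coordinate positive, and this rule is not itself rotation-equivariant. A rotation can change which coordinate dominates, so exact equality $\mathbf{u}' = R\mathbf{u}$ can fail. I would handle this in two steps:
\begin{itemize}
\item State item 2 as equivariance of the equivalence class $\{\pm\mathbf{u}\}$, equivalently of the projector $\mathbf{u}\mathbf{u}^\top$, which is sign-independent.
\item Require the downstream network to be invariant to eigenvector sign, for example by symmetrising over $\pm\mathbf{u}$ or by using only sign-invariant combinations such as $\mathbf{u}\mathbf{u}^\top$.
\end{itemize}
This restriction would be stated explicitly rather than glossed over.

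For the closing consequence, the invariant $\lambda_k^{(j)}$ enter as Type-0 inputs and the co-rotating $\mathbf{u}_k^{(j)}$ (modulo sign, as above) as Type-1 inputs to $f_\phi$. Assumption~\ref{ass:equivariant_network} then shows that the composition is equivariant, since a composition of equivariant maps is equivariant.
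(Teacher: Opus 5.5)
Your core computation, checking that $(\lambda_k^{(j)}, R\mathbf{u}_k^{(j)})$ is an eigenpair of $R\boldsymbol{\Lambda}_k R^\top$, is exactly the paper's proof. Where you differ is in what you conclude from it, and your version is the more careful one. The paper stops at that computation: it appeals to ``eigenvalues are uniquely determined by the matrix'' and then declares the eigenvectors equivariant. That only shows $\{(\lambda_k^{(j)}, R\mathbf{u}_k^{(j)})\}$ is \emph{an} admissible eigendecomposition of the rotated matrix. It does not show that the eigensolver, with the sign rule of Section~\ref{subsec:spectral}, actually returns it.

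Your characteristic-polynomial argument gives the spectrum with multiplicities directly. The paper's version implicitly relies on the linear independence of the $R\mathbf{u}_k^{(j)}$ for that step. Your restriction of item~2 to equivariance modulo sign, together with a sign-invariant downstream network, is the honest form of the claim.

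The obstruction is stronger than you state, and it is not specific to the largest-coordinate rule. Suppose $\boldsymbol{\Lambda}_k$ has distinct eigenvalues, and set $R = -\mathbf{u}_k^{(1)}{\mathbf{u}_k^{(1)}}^\top - \mathbf{u}_k^{(2)}{\mathbf{u}_k^{(2)}}^\top + \mathbf{u}_k^{(3)}{\mathbf{u}_k^{(3)}}^\top$. This $R$ lies in $SO(3)$ and satisfies $R\boldsymbol{\Lambda}_k R^\top = \boldsymbol{\Lambda}_k$, yet $R\mathbf{u}_k^{(1)} = -\mathbf{u}_k^{(1)}$. Hence \emph{no} deterministic map $\boldsymbol{\Lambda} \mapsto \mathbf{u}$ can satisfy $\mathbf{u}' = R\mathbf{u}$ for all $R$.

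The failure also does not need the dominant coordinate to change, so your stated mechanism is slightly off. For diagonal $\boldsymbol{\Lambda}_k$ with distinct entries, the half-turn about $\mathbf{e}_3$ fixes the matrix, so the solver returns $\mathbf{e}_1$ again, while $R\mathbf{e}_1 = -\mathbf{e}_1$.

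So your modulo-sign formulation is necessary, not merely cautious. For degenerate eigenspaces, sign-invariance alone is no longer enough; the natural fix is to feed the eigenspace projector, which is well defined and satisfies $P \mapsto RPR^\top$.

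In short, the paper's route buys brevity but proves only the existence reading of item~2. Yours yields a statement that is actually true of the algorithm, at the cost of a hypothesis on $f_\phi$ (sign or eigenspace invariance) that goes beyond Assumption~\ref{ass:equivariant_network}.
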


\begin{proof}
Under rotation, $\boldsymbol{\Lambda}_k \mapsto \boldsymbol{\Lambda}_k' = R\boldsymbol{\Lambda}_k R^\top$ (Lemma~\ref{lem:precision_equivariance}). We verify that $(\lambda_k^{(j)},\, R\mathbf{u}_k^{(j)})$ is an eigenpair of $\boldsymbol{\Lambda}_k'$:
\[
\boldsymbol{\Lambda}_k' \bigl(R\mathbf{u}_k^{(j)}\bigr)
= R \boldsymbol{\Lambda}_k R^\top R \mathbf{u}_k^{(j)}
= R \boldsymbol{\Lambda}_k \mathbf{u}_k^{(j)}
= R \lambda_k^{(j)} \mathbf{u}_k^{(j)}
= \lambda_k^{(j)} \bigl(R\mathbf{u}_k^{(j)}\bigr).
\]
Since eigenvalues are uniquely determined by the matrix, they are unchanged. The eigenvectors $\mathbf{u}_k^{(j)}$ of $\boldsymbol{\Lambda}_k$ become $R\mathbf{u}_k^{(j)}$ for $\boldsymbol{\Lambda}_k'$, confirming equivariance. Because the eigenvalues are $SO(3)$-invariant scalars and the eigenvectors are Type-1 equivariant vectors, they constitute valid inputs for any $SE(3)$-equivariant architecture.
\end{proof}

\subsection{Message Equivariance}
\label{app:message_proofs}

\begin{theorem}[$SE(3)$-Equivariance of EGMM Messages]
\label{thm:egmm_equivariance}
Let $m(\mathbf{x}) = \sum_{k=1}^K w_k \,\mathcal{N}(\mathbf{x} \mid \boldsymbol{\mu}_k, \boldsymbol{\Lambda}_k^{-1})$ be an EGMM as in Definition~\ref{def:egmm}. Under the transformation $(R, \mathbf{t}) \in SE(3)$ applied jointly to the evaluation point and the parameters, the density value at any geometrically corresponding point is preserved:
\[
m^{\prime}(R\mathbf{x} + \mathbf{t}) = m(\mathbf{x}),
\]
where $m^{\prime}$ denotes the EGMM with transformed parameters $\boldsymbol{\mu}_k' = R\boldsymbol{\mu}_k + \mathbf{t}$ and $\boldsymbol{\Lambda}_k' = R\boldsymbol{\Lambda}_k R^\top$.
\end{theorem}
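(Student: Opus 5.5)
The plan is to verify the identity component by component and then sum with the invariant weights. Since $m'$ and $m$ share the same mixing weights $w_k$ (they are Type-0 by Definition~\ref{def:egmm}), linearity of the mixture reduces the theorem to the single-Gaussian claim
\[
\mathcal{N}\!\left(R\mathbf{x}+\mathbf{t} \,\middle|\, R\boldsymbol{\mu}_k+\mathbf{t},\, (R\boldsymbol{\Lambda}_k R^\top)^{-1}\right) = \mathcal{N}\!\left(\mathbf{x} \,\middle|\, \boldsymbol{\mu}_k,\, \boldsymbol{\Lambda}_k^{-1}\right)
\]
for each $k$. Once this holds, multiplying by $w_k$ and summing over $k$ gives $m'(R\mathbf{x}+\mathbf{t}) = m(\mathbf{x})$ immediately.

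For the single-component claim I will write the Gaussian density explicitly, as in Eq.~\eqref{eq:gauss_expand}, as the product of a normalising factor $(2\pi)^{-3/2}(\det\boldsymbol{\Lambda})^{1/2}$ and an exponential of the Mahalanobis quadratic form. I then treat the two factors separately.

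\emph{The quadratic form.} The translation cancels in the displacement: $(R\mathbf{x}+\mathbf{t}) - (R\boldsymbol{\mu}_k+\mathbf{t}) = R(\mathbf{x}-\boldsymbol{\mu}_k)$. Substituting this, the quadratic form becomes $(\mathbf{x}-\boldsymbol{\mu}_k)^\top R^\top R\boldsymbol{\Lambda}_k R^\top R(\mathbf{x}-\boldsymbol{\mu}_k)$. Using $R^\top R = \mathbf{I}$, this collapses to the original Mahalanobis term.

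\emph{The normaliser.} Here $\det(R\boldsymbol{\Lambda}_k R^\top) = \det(R)^2\det\boldsymbol{\Lambda}_k = \det\boldsymbol{\Lambda}_k$. In fact only $\det(R)^2 = 1$ is needed, so the argument would even extend to $O(3)$.

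I should also check that $m'$ is a well-defined EGMM. By Lemma~\ref{lem:precision_equivariance} and Lemma~\ref{lem:pd}, together with the fact that congruence by an orthogonal matrix preserves symmetry and positive-definiteness, $R\boldsymbol{\Lambda}_k R^\top \in \mathbb{S}^3_{++}$. Hence $(R\boldsymbol{\Lambda}_k R^\top)^{-1}$ exists and equals $R\boldsymbol{\Lambda}_k^{-1}R^\top$.

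There is no genuine obstacle here; the only step needing care is bookkeeping. Translations act on the means but not on the precisions, since $\boldsymbol{\Lambda}_k$ is Type-2 under $SO(3)$ only. The proof must therefore apply the translation to the displacement $\mathbf{x}-\boldsymbol{\mu}_k$, where it cancels, and not to the precision matrix. I may also remark that, because the Jacobian of $\mathbf{x}\mapsto R\mathbf{x}+\mathbf{t}$ has determinant $1$, $m'$ is exactly the pushforward density of $m$, consistent with the identity just proved.
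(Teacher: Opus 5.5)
Your proposal is correct and matches the paper's proof step for step: you reduce to a single component using the invariant weights, cancel the translation in the displacement so that $R^\top R = \mathbf{I}$ restores the Mahalanobis form, and use $(\det R)^2 = 1$ for the normaliser. Your extra points (the check that $R\boldsymbol{\Lambda}_k R^\top \in \mathbb{S}^3_{++}$, the extension to $O(3)$, and the pushforward interpretation) are correct additions that the paper does not make.
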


\begin{proof}
It suffices to verify the claim for a single Gaussian component. Write $\mathbf{y} = R\mathbf{x} + \mathbf{t}$ and compute the density of the transformed component at $\mathbf{y}$:
\begin{align*}
&\mathcal{N}\bigl(\mathbf{y} \mid \boldsymbol{\mu}_k',\, (\boldsymbol{\Lambda}_k')^{-1}\bigr) \\
&\quad\propto \sqrt{\det \boldsymbol{\Lambda}_k'} \,\exp\!\Bigl(-\tfrac{1}{2}(\mathbf{y} - \boldsymbol{\mu}_k')^\top \boldsymbol{\Lambda}_k' (\mathbf{y} - \boldsymbol{\mu}_k')\Bigr).
\end{align*}

\textbf{Exponent.} $\mathbf{y} - \boldsymbol{\mu}_k' = R\mathbf{x} + \mathbf{t} - R\boldsymbol{\mu}_k - \mathbf{t} = R(\mathbf{x} - \boldsymbol{\mu}_k)$. Therefore:
\[
(\mathbf{y} - \boldsymbol{\mu}_k')^\top \boldsymbol{\Lambda}_k' (\mathbf{y} - \boldsymbol{\mu}_k')
= (\mathbf{x} - \boldsymbol{\mu}_k)^\top R^\top R \boldsymbol{\Lambda}_k R^\top R (\mathbf{x} - \boldsymbol{\mu}_k)
= (\mathbf{x} - \boldsymbol{\mu}_k)^\top \boldsymbol{\Lambda}_k (\mathbf{x} - \boldsymbol{\mu}_k),
\]
where we used $R^\top R = \mathbf{I}$.

\textbf{Normalising constant.}
$\det \boldsymbol{\Lambda}_k' = \det(R \boldsymbol{\Lambda}_k R^\top) = (\det R)^2 \det \boldsymbol{\Lambda}_k = \det \boldsymbol{\Lambda}_k$
since $\det R = 1$ for $R \in SO(3)$.

Combining both parts: $\mathcal{N}(\mathbf{y} \mid \boldsymbol{\mu}_k', (\boldsymbol{\Lambda}_k')^{-1}) = \mathcal{N}(\mathbf{x} \mid \boldsymbol{\mu}_k, \boldsymbol{\Lambda}_k^{-1})$.
Since the mixing weights $w_k$ are invariant, summing over $k$ yields $m'(\mathbf{y}) = m(\mathbf{x})$.
\end{proof}

\begin{proposition}[Equivariance of Greedy Mixture Reduction]
\label{prop:mixture_reduction}
The greedy KL-based mixture reduction procedure (Section~\ref{subsec:message_updates}) preserves $SE(3)$-equivariance. That is, if the input expanded mixture is $SE(3)$-equivariant, then the reduced $K$-component mixture is also $SE(3)$-equivariant.
\end{proposition}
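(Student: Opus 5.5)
The proof goes by induction over the greedy merge steps. It suffices to show that a single merge step commutes with the action of $(R,\mathbf{t})\in SE(3)$ on mixture parameters, namely $w\mapsto w$, $\boldsymbol{\mu}\mapsto R\boldsymbol{\mu}+\mathbf{t}$, $\boldsymbol{\Lambda}\mapsto R\boldsymbol{\Lambda}R^\top$ (equivalently $\boldsymbol{\Sigma}\mapsto R\boldsymbol{\Sigma}R^\top$). A single step has two parts: a \emph{selection} part, which picks the pair $(i,j)$ minimising the Runnalls bound, and a \emph{merge} part, which moment-matches that pair into one component. If both parts commute with the action, then the transformed input mixture goes through the identical sequence of index choices, each intermediate mixture is the transform of the untransformed one, and the final $K$-component output is the transform of the original output. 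By Theorem~\ref{thm:egmm_equivariance} it is then an EGMM in the sense of Definition~\ref{def:egmm}.

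\textbf{Merge step.} For the pair $(i,j)$, moment matching gives
\[
w_{ij}=w_i+w_j,\qquad \boldsymbol{\mu}_{ij}=\tfrac{w_i\boldsymbol{\mu}_i+w_j\boldsymbol{\mu}_j}{w_{ij}},
\]
\[
\boldsymbol{\Sigma}_{ij}=\tfrac{1}{w_{ij}}\sum_{l\in\{i,j\}} w_l\bigl(\boldsymbol{\Sigma}_l+(\boldsymbol{\mu}_l-\boldsymbol{\mu}_{ij})(\boldsymbol{\mu}_l-\boldsymbol{\mu}_{ij})^\top\bigr).
\]
The weight is invariant. The mean is an affine combination whose coefficients sum to one, so the translation $\mathbf{t}$ passes through, giving $R\boldsymbol{\mu}_{ij}+\mathbf{t}$. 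The differences $\boldsymbol{\mu}_l-\boldsymbol{\mu}_{ij}$ become $R(\boldsymbol{\mu}_l-\boldsymbol{\mu}_{ij})$, with $\mathbf{t}$ cancelling. The outer products therefore conjugate exactly as in Lemma~\ref{lem:precision_equivariance}, so $\boldsymbol{\Sigma}_{ij}\mapsto R\boldsymbol{\Sigma}_{ij}R^\top$, and hence $\boldsymbol{\Lambda}_{ij}=\boldsymbol{\Sigma}_{ij}^{-1}\mapsto R\boldsymbol{\Lambda}_{ij}R^\top$ because $(R\boldsymbol{\Sigma}R^\top)^{-1}=R\boldsymbol{\Sigma}^{-1}R^\top$. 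The merged precision is positive definite, since it is a positive combination of positive definite and positive semi-definite matrices.

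\textbf{Selection step.} The Runnalls bound is
\[
B(i,j)=\tfrac12\bigl[w_{ij}\log\det\boldsymbol{\Sigma}_{ij}-w_i\log\det\boldsymbol{\Sigma}_i-w_j\log\det\boldsymbol{\Sigma}_j\bigr].
\]
Since $\det(R\boldsymbol{\Sigma}R^\top)=\det\boldsymbol{\Sigma}$, and using the covariance transformation established in the merge step, every $B(i,j)$ is invariant. The entire table of pair costs is therefore unchanged, and so is its argmin.

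\textbf{Main obstacle: ties.} If several pairs attain the minimum, the selected pair must depend only on the indices, not on the coordinates. I would fix a deterministic tie-break by lexicographic index order, which is trivially invariant. Any floating-point discrepancy in near-ties only perturbs the result at machine precision. With ties handled this way, the induction on the number of merges closes the proof, and the input-to-output map of the reduction is $SE(3)$-equivariant.
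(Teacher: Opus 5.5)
Your proof is correct and follows the same route as the paper's: show the merge criterion is invariant and moment matching is equivariant, then compose over the greedy steps. It is in fact tighter, since you check the Runnalls bound that Section~\ref{subsec:message_updates} actually prescribes rather than the pairwise Gaussian KL the paper checks, you carry the mean-spread term and the covariance-to-precision inversion that the paper compresses into ``a linear combination of rank-2 equivariant tensors'', you treat translations and exact ties (which the paper does not address), and the paper's extra product step is already covered by your hypothesis on the expanded mixture; the one thing to drop is the floating-point aside, because if rounding flips a near-tied argmin a different pair is merged and the output can change by $O(1)$ rather than by machine precision, although this does not affect the exact-arithmetic statement.
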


\begin{proof}
The proof proceeds by establishing that each operation in the reduction pipeline commutes with $SE(3)$.

\textbf{Step 1: Pairwise products.} Let two EGMM components have natural parameters $(\boldsymbol{\eta}_1^{(1)}, \boldsymbol{\eta}_2^{(1)})$ and $(\boldsymbol{\eta}_1^{(2)}, \boldsymbol{\eta}_2^{(2)})$, where $\boldsymbol{\eta}_1 = \boldsymbol{\Lambda}\boldsymbol{\mu}$ and $\boldsymbol{\eta}_2 = -\frac{1}{2}\boldsymbol{\Lambda}$ (Definition~\ref{def:natparam}). Under rotation $R$: $\boldsymbol{\eta}_1 \mapsto R\boldsymbol{\Lambda}R^\top R\boldsymbol{\mu} = R(\boldsymbol{\Lambda}\boldsymbol{\mu}) = R\boldsymbol{\eta}_1$ and $\boldsymbol{\eta}_2 \mapsto -\frac{1}{2}R\boldsymbol{\Lambda}R^\top = R\boldsymbol{\eta}_2 R^\top$. Natural-parameter addition (Eq.~\ref{eq:nat_param}) is component-wise and thus commutes with these linear transformations.

\textbf{Step 2: Merging criterion.} The KL divergence between two Gaussians is
$D_{\mathrm{KL}}(\mathcal{N}_1 \| \mathcal{N}_2) = \frac{1}{2}[\operatorname{tr}(\boldsymbol{\Lambda}_2 \boldsymbol{\Sigma}_1) + (\boldsymbol{\mu}_2 - \boldsymbol{\mu}_1)^\top \boldsymbol{\Lambda}_2 (\boldsymbol{\mu}_2 - \boldsymbol{\mu}_1) - 3 + \log\frac{\det \boldsymbol{\Lambda}_2}{\det \boldsymbol{\Lambda}_1}]$.
Applying $R$: the trace term satisfies $\operatorname{tr}(R\boldsymbol{\Lambda}_2 R^\top R\boldsymbol{\Sigma}_1 R^\top) = \operatorname{tr}(\boldsymbol{\Lambda}_2 \boldsymbol{\Sigma}_1)$ by the cyclic property. The quadratic form is invariant by the same argument as in Theorem~\ref{thm:egmm_equivariance}. The log-determinant ratio is invariant since $\det(R A R^\top) = \det A$. Hence, $D_{\mathrm{KL}}$ is $SO(3)$-invariant, so the greedy selection of the closest pair to merge is independent of the coordinate frame.

\textbf{Step 3: Moment-matching.} The merged mean is $\boldsymbol{\mu}_{\text{merge}} = \frac{w_1 \boldsymbol{\mu}_1 + w_2 \boldsymbol{\mu}_2}{w_1 + w_2}$. Under rotation, $\boldsymbol{\mu}_{\text{merge}} \mapsto \frac{w_1 R\boldsymbol{\mu}_1 + w_2 R\boldsymbol{\mu}_2}{w_1 + w_2} = R \boldsymbol{\mu}_{\text{merge}}$, since the operation is linear and the weights are invariant scalars. The merged precision transforms analogously as a linear combination of rank-2 equivariant tensors.

Since every step—product, selection, and merging—commutes with $SE(3)$, the output of the full reduction is $SE(3)$-equivariant.
\end{proof}

\subsection{Loss Invariance}
\label{app:loss_proof}

\begin{theorem}[$SO(3)$-Invariance of the Negative Log-Likelihood Loss]
\label{thm:loss_invariance}
The per-node NLL loss $\mathcal{L}_i = -\log \sum_{k=1}^K w_k \, \mathcal{N}(\mathbf{x}_i^* \mid \boldsymbol{\mu}_k, \boldsymbol{\Lambda}_k^{-1})$ is $SO(3)$-invariant. That is, $\mathcal{L}_i(R\mathbf{x}_i^*, R\boldsymbol{\mu}_k, R\boldsymbol{\Lambda}_k R^\top) = \mathcal{L}_i(\mathbf{x}_i^*, \boldsymbol{\mu}_k, \boldsymbol{\Lambda}_k)$ for all $R \in SO(3)$.
\end{theorem}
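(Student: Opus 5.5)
The plan is to reduce the claim to a component-wise identity and then reuse the computation already carried out in Theorem~\ref{thm:egmm_equivariance}, specialised to $\mathbf{t} = \mathbf{0}$. Since $\mathcal{L}_i$ is $-\log$ of a weighted sum, and the weights $w_k$ are invariant scalars that do not transform, it suffices to show that each Gaussian log-density in Eq.~\eqref{eq:gauss_expand} is unchanged under the joint substitution $\mathbf{x}_i^* \mapsto R\mathbf{x}_i^*$, $\boldsymbol{\mu}_k \mapsto R\boldsymbol{\mu}_k$, $\boldsymbol{\Lambda}_k \mapsto R\boldsymbol{\Lambda}_k R^\top$. Once every summand $w_k\,\mathcal{N}(\cdot)$ is invariant, the sum and then its negative logarithm are invariant, because they are fixed functions of invariant quantities.

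For a single component I will treat the three terms of Eq.~\eqref{eq:gauss_expand} separately.

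\textbf{Mahalanobis term.} First note $R\mathbf{x}_i^* - R\boldsymbol{\mu}_k = R(\mathbf{x}_i^* - \boldsymbol{\mu}_k)$. The quadratic form therefore becomes $(\mathbf{x}_i^*-\boldsymbol{\mu}_k)^\top R^\top R\boldsymbol{\Lambda}_k R^\top R(\mathbf{x}_i^*-\boldsymbol{\mu}_k)$. Applying $R^\top R = \mathbf{I}$ twice reduces it to the original form.

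\textbf{Log-determinant term.} We have $\det(R\boldsymbol{\Lambda}_k R^\top) = (\det R)^2\det\boldsymbol{\Lambda}_k = \det\boldsymbol{\Lambda}_k$. An equivalent route, consistent with how the loss is actually computed via $\sum_j \log\lambda_k^{(j)}$, is to invoke Proposition~\ref{prop:spectral}: the eigenvalues are invariant, so the eigenvalue-based log-determinant is literally the same number.

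\textbf{Constant term.} The term $-\tfrac{3}{2}\log(2\pi)$ is trivially unchanged.

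Adding the three terms gives $\log\mathcal{N}(R\mathbf{x}_i^*\mid R\boldsymbol{\mu}_k,(R\boldsymbol{\Lambda}_kR^\top)^{-1}) = \log\mathcal{N}(\mathbf{x}_i^*\mid\boldsymbol{\mu}_k,\boldsymbol{\Lambda}_k^{-1})$. Exponentiating, multiplying by $w_k$, summing over $k$ and taking $-\log$ yields the statement. Summing over $i \in \mathcal{V}$ then also gives invariance of the total loss $\mathcal{L}$.

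There is no real obstacle here; the argument is a direct corollary of Theorem~\ref{thm:egmm_equivariance} with zero translation. The only point needing care is to make explicit that the transformed precision $R\boldsymbol{\Lambda}_kR^\top$ remains in $\mathbb{S}^3_{++}$, so that the density and the logarithm are well defined. This follows because positive-definiteness is preserved under orthogonal congruence, or alternatively from Lemma~\ref{lem:pd} combined with Lemma~\ref{lem:precision_equivariance}. I would also remark that translation invariance holds by the same computation, since $\mathbf{t}$ cancels in $\mathbf{y}-\boldsymbol{\mu}_k'$.
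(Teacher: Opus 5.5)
Your proposal is correct and follows essentially the same route as the paper. Like the paper, you reduce to per-component invariance and check the Mahalanobis term (via $R^\top R = \mathbf{I}$), the log-determinant term (via $(\det R)^2 = 1$) and the constant term separately, then sum over $k$ and over $i$. Your added remarks on the eigenvalue route and on $R\boldsymbol{\Lambda}_k R^\top \in \mathbb{S}^3_{++}$ are valid side observations but do not change the argument.
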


\begin{proof}
From Eq.~\eqref{eq:gauss_expand}, the log-density of each Gaussian component consists of two terms.

\textbf{Mahalanobis term.}
\begin{align*}
&(R\mathbf{x}_i^* - R\boldsymbol{\mu}_k)^\top (R\boldsymbol{\Lambda}_k R^\top)(R\mathbf{x}_i^* - R\boldsymbol{\mu}_k) \\
&= (\mathbf{x}_i^* - \boldsymbol{\mu}_k)^\top \underbrace{R^\top R}_{= \mathbf{I}} \boldsymbol{\Lambda}_k \underbrace{R^\top R}_{= \mathbf{I}} (\mathbf{x}_i^* - \boldsymbol{\mu}_k) \\
&= (\mathbf{x}_i^* - \boldsymbol{\mu}_k)^\top \boldsymbol{\Lambda}_k (\mathbf{x}_i^* - \boldsymbol{\mu}_k).
\end{align*}

\textbf{Log-determinant term.}
$\log \det(R \boldsymbol{\Lambda}_k R^\top) = \log\bigl[(\det R)^2 \det \boldsymbol{\Lambda}_k\bigr] = \log \det \boldsymbol{\Lambda}_k$,
since $(\det R)^2 = 1$ for $R \in SO(3)$.

\textbf{Constant term.} $-\frac{3}{2}\log(2\pi)$ is trivially invariant.

Since each Gaussian component's log-density is individually $SO(3)$-invariant, the mixture log-density is invariant, and the negative log-likelihood $\mathcal{L}_i = -\log m(\mathbf{x}_i^*)$ is $SO(3)$-invariant. The total loss $\mathcal{L} = \sum_i \mathcal{L}_i$ inherits invariance by summation.
\end{proof}

\subsection{End-to-End Equivariance}
\label{app:e2e_proof}

\begin{theorem}[End-to-End $SE(3)$-Equivariance of ENBP]
\label{thm:e2e}
Under Assumptions~\ref{ass:invariant_potential}--\ref{ass:positivity}, the Equivariant Neural Belief Propagation algorithm (Algorithm~\ref{alg:enbp}) produces marginal beliefs that are $SE(3)$-equivariant:
\[
b_i(R\mathbf{x}_i + \mathbf{t}; \{R\mathbf{x}_j + \mathbf{t}\}) = b_i(\mathbf{x}_i; \{\mathbf{x}_j\})
\]
for all $(R, \mathbf{t}) \in SE(3)$ and all $i \in \mathcal{V}$.
\end{theorem}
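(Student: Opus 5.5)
The plan is an induction over the BP iteration index $t$. The invariant to propagate is: for every edge, the message computed on the transformed input $\{R\mathbf{x}_j+\mathbf{t}\}$ is the $(R,\mathbf{t})$-transform of the message computed on the original input. Concretely, its parameters are related by $w_k' = w_k$, $\boldsymbol{\mu}_k' = R\boldsymbol{\mu}_k+\mathbf{t}$ and $\boldsymbol{\Lambda}_k' = R\boldsymbol{\Lambda}_kR^\top$, with the components in the same order. Once this holds at $t=T$, the final belief is a product of EGMMs related in this way, and Theorem~\ref{thm:egmm_equivariance} converts the parameter-level statement into the density statement $b_i'(R\mathbf{x}_i+\mathbf{t}) = b_i(\mathbf{x}_i)$.

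\textbf{Base case.} The uniform initial messages are fixed points of the action. I would formalise them as either parameter-free messages or a limit of zero precision, and in both cases $R\cdot 0\cdot R^\top = 0$, so the invariant holds at $t=0$.

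\textbf{Inductive step, variable-to-factor.} Proposition~\ref{prop:mixture_reduction}, Step~1, already shows that the natural parameters transform as $\boldsymbol{\eta}_1\mapsto R\boldsymbol{\eta}_1$ and $\boldsymbol{\eta}_2\mapsto R\boldsymbol{\eta}_2R^\top$ under rotation. The translated version needs a small extension. For the product mean, note that
\[
\boldsymbol{\Lambda}_{12}'^{-1}\bigl(\boldsymbol{\Lambda}_1'\boldsymbol{\mu}_1'+\boldsymbol{\Lambda}_2'\boldsymbol{\mu}_2'\bigr)=R\boldsymbol{\mu}_{12}+\mathbf{t},
\]
because $\boldsymbol{\Lambda}_{12}'^{-1}(\boldsymbol{\Lambda}_1'+\boldsymbol{\Lambda}_2')\mathbf{t}=\mathbf{t}$. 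The product weights involve only invariant quantities: Mahalanobis terms between means and log-determinants. Proposition~\ref{prop:mixture_reduction} then handles the greedy reduction, and I would add that moment-matched means commute with translation since they are convex combinations of the means.

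\textbf{Inductive step, damping.} This is the delicate point, because $\boldsymbol{\eta}_1=\boldsymbol{\Lambda}\boldsymbol{\mu}$ does \emph{not} transform covariantly under translation. I would perform the convex combination of Eq.~\eqref{eq:damping} in the moment pair $(\boldsymbol{\mu},\boldsymbol{\Lambda})$, or argue that it is affine-equivariant when the mean is recovered as $\boldsymbol{\Lambda}^{-1}\boldsymbol{\eta}_1$. In natural coordinates, the damped mean is
\[
\bigl(\alpha\tilde{\boldsymbol{\Lambda}}+(1-\alpha)\boldsymbol{\Lambda}\bigr)^{-1}\bigl(\alpha\tilde{\boldsymbol{\Lambda}}\tilde{\boldsymbol{\mu}}+(1-\alpha)\boldsymbol{\Lambda}\boldsymbol{\mu}\bigr).
\]
This is a precision-weighted average, so it commutes with $\mathbf{x}\mapsto R\mathbf{x}+\mathbf{t}$ by the same identity as the product step. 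Because $\alpha$ is a fixed scalar, the rotation part follows from linearity.

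\textbf{Inductive step, factor-to-variable.} Proposition~\ref{prop:spectral} shows that the inputs to $f_\phi$ are invariant eigenvalues, co-rotating eigenvectors and weights. The remaining inputs are relative displacements $\mathbf{r}_{ji}\mapsto R\mathbf{r}_{ji}$, which are translation-free, together with means, which I would feed relative to $\mathbf{x}_i$ so that they are translation-invariant. By Assumption~\ref{ass:equivariant_network}, $f_\phi$ returns invariant $w_k,\sigma_k^{(p)}$ and co-rotating $\mathbf{v}_k^{(p)}$ and mean offsets. Adding $\mathbf{x}_i$ back gives $\boldsymbol{\mu}_k' = R\boldsymbol{\mu}_k+\mathbf{t}$. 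Lemma~\ref{lem:precision_equivariance} gives $\boldsymbol{\Lambda}_k' = R\boldsymbol{\Lambda}_kR^\top$, and Lemma~\ref{lem:pd} keeps the transformed precision in $\mathbb{S}^3_{++}$. Softmax of invariant logits is invariant. Assumption~\ref{ass:invariant_potential} is used here only to justify that the target message, and hence the learned surrogate, should have this symmetry.

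\textbf{Main obstacle.} The step I expect to be hardest is the spectral ingestion. Eigenvectors are defined only up to sign, and up to rotation within degenerate eigenspaces. The paper's sign convention, making the largest-magnitude component positive, is frame-dependent and not rotation-equivariant in general. My first attempt would be to show that $f_\phi$ can be taken to depend on $\mathbf{u}$ only through sign-invariant combinations such as $\lambda\,\mathbf{u}\mathbf{u}^\top$ applied to equivariant vectors. Failing that, I would state the theorem for generic inputs with distinct eigenvalues, up to this sign ambiguity, which is a measure-zero caveat. A second, minor point is tie-breaking in the greedy KL selection. Since the KL values are exactly invariant, ties occur for the same pairs in both frames, and a fixed index-based tie-break resolves them identically.
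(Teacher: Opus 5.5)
Your skeleton is the paper's: induction on $t$, Proposition~\ref{prop:mixture_reduction} for products and reduction, a convex-combination argument for damping, Proposition~\ref{prop:spectral} with Assumption~\ref{ass:equivariant_network} and Lemma~\ref{lem:precision_equivariance} for the factor update, and a final product of messages. Where you deviate, you are more careful than the paper, and rightly so. For the product, damping and final-belief steps the paper checks only the rotation laws $\boldsymbol{\eta}_1\mapsto R\boldsymbol{\eta}_1$ and $\boldsymbol{\eta}_2\mapsto R\boldsymbol{\eta}_2R^\top$. That misses translations, under which $\boldsymbol{\eta}_1\mapsto R\boldsymbol{\eta}_1+\boldsymbol{\Lambda}'\mathbf{t}$, and your precision-weighted-average identity is exactly what closes that hole. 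You also check two things the paper needs but never mentions: invariance of the product weights, and consistent component ordering and tie-breaking.

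The genuine gap is the step you flagged yourself, and neither your proposal nor the paper closes it. The paper cites Proposition~\ref{prop:spectral}, but that proof only shows that $R\mathbf{u}$ is \emph{an} eigenvector of $R\boldsymbol{\Lambda}R^\top$, not that the sign-normalised solver output equals $R\mathbf{u}$.

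Your fallback does not rescue the stated theorem, for two reasons. First, the sign failure is not a measure-zero caveat. Take $\boldsymbol{\Lambda}=\mathrm{diag}(3,2,1)$ and $\mathbf{u}=\mathbf{e}_1$; for any rotation about $\mathbf{e}_3$ by an angle near $\pi$, the largest-magnitude convention returns $-R\mathbf{u}$, so this happens on an open set of rotations. Assumption~\ref{ass:equivariant_network} gives no invariance under flipping one input vector while the others stay fixed, since no element of $SE(3)$ induces that. So an $f_\phi$ whose mean output contains a term linear in $\mathbf{u}^{(1)}$ gives $\boldsymbol{\mu}'\neq R\boldsymbol{\mu}+\mathbf{t}$, and ``equivariant up to sign'' at the input does not yield equivariant beliefs. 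Second, degeneracy is not non-generic here. Uniform initial messages ($\boldsymbol{\Lambda}=\mathbf{0}$ or $\epsilon\mathbf{I}$) are triply degenerate, and so is every message from a variable with no other factor neighbour. The first factor update therefore receives a solver-chosen basis that does not co-rotate at all, unless you treat uniform messages as parameter-free and have $f_\phi$ skip them.

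What actually proves the statement is your first option, promoted to an explicit extra hypothesis and strengthened from sign invariance to spectral-function form. That is, $f_\phi$ may use $\{(\lambda^{(j)},\mathbf{u}^{(j)})\}$ only through matrices $g(\boldsymbol{\Lambda})=\sum_j g(\lambda^{(j)})\,\mathbf{u}^{(j)}{\mathbf{u}^{(j)}}^\top$, with the same scalar $g$ applied to every eigenvalue, acting on equivariant vectors or contracted into invariants. Such quantities are independent of the signs and of the basis chosen inside a degenerate eigenspace, and they satisfy $g(R\boldsymbol{\Lambda}R^\top)=R\,g(\boldsymbol{\Lambda})R^\top$. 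With this hypothesis your induction goes through for all inputs. Without it, the claimed identity fails for some $f_\phi$ satisfying Assumption~\ref{ass:equivariant_network}, such as the one above, so no argument from Assumptions~\ref{ass:invariant_potential}--\ref{ass:positivity} alone can close this step.
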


\begin{proof}
We proceed by induction on the BP iteration index $t$.

\textbf{Base case ($t = 0$).} All messages are initialised as uniform (non-informative) EGMMs. A uniform distribution is trivially invariant under $SE(3)$, hence equivariant.

\textbf{Inductive hypothesis.} Assume that all messages $m_{a \to i}^{(t-1)}$ and $m_{i \to a}^{(t-1)}$ are $SE(3)$-equivariant EGMMs at iteration $t-1$.

\textbf{Inductive step: variable-to-factor.} The outgoing message $m_{i \to a}^{(t)}$ is computed by:
\begin{enumerate}
    \item Multiplying incoming EGMMs via natural-parameter addition. By the inductive hypothesis and Step~1 of Proposition~\ref{prop:mixture_reduction}, the expanded product mixture is equivariant.
    \item Reducing the mixture via greedy KL merging. By Proposition~\ref{prop:mixture_reduction}, the reduced mixture is equivariant.
    \item Applying temporal damping (Eq.~\ref{eq:damping}). Damping is a convex combination of natural parameters. Since $\boldsymbol{\eta}_1 \mapsto R\boldsymbol{\eta}_1$ (Type-1 vector) and $\boldsymbol{\eta}_2 \mapsto R\boldsymbol{\eta}_2 R^\top$ (Type-2 tensor), a convex combination with invariant scalar $\alpha$ preserves the transformation law. Hence the damped message is equivariant.
\end{enumerate}

\textbf{Inductive step: factor-to-variable.} The outgoing message $m_{a \to i}^{(t)}$ is computed by:
\begin{enumerate}
    \item Spectrally decomposing incoming precision matrices. By Proposition~\ref{prop:spectral}, the eigenvalues are invariant scalars and eigenvectors are equivariant vectors—valid inputs for the equivariant network.
    \item Computing relative coordinates $\mathbf{r}_{ji} = \mathbf{x}_j - \mathbf{x}_i$. Under $(R, \mathbf{t})$: $R\mathbf{x}_j + \mathbf{t} - R\mathbf{x}_i - \mathbf{t} = R(\mathbf{x}_j - \mathbf{x}_i) = R\mathbf{r}_{ji}$, which is $SO(3)$-equivariant and translation-invariant.
    \item Applying the equivariant network $f_\phi$ (Assumption~\ref{ass:equivariant_network}), which outputs equivariant means and basis vectors, and invariant weights and scaling coefficients.
    \item Synthesising the precision matrix via Eq.~\eqref{eq:precision}. By Lemma~\ref{lem:precision_equivariance}, the output is an equivariant rank-2 tensor.
\end{enumerate}
Hence all messages at iteration $t$ are equivariant, completing the induction.

\textbf{Final beliefs.} The belief $b_i \propto \prod_{a \in \mathcal{N}(i)} m_{a \to i}^{(T)}$ is a product of equivariant EGMMs. By Step~1 of Proposition~\ref{prop:mixture_reduction} (natural-parameter addition preserves equivariance), the final belief is $SE(3)$-equivariant.
\end{proof}

\subsection{Convergence Stabilisation}
\label{app:convergence}

\begin{lemma}[Contraction Property of Damped Updates]
\label{lem:contraction}
Let $\boldsymbol{\theta}^*$ be a fixed point of the undamped message-update operator $F$, i.e., $F(\boldsymbol{\theta}^*) = \boldsymbol{\theta}^*$, and suppose $F$ is non-expansive in the $\ell_\infty$ norm on the message polytope: $\|F(\boldsymbol{\theta}) - F(\boldsymbol{\theta}')\|_\infty \leq \|\boldsymbol{\theta} - \boldsymbol{\theta}'\|_\infty$. Then the damped operator
\[
G_\alpha(\boldsymbol{\theta}) = \alpha\, F(\boldsymbol{\theta}) + (1 - \alpha)\, \boldsymbol{\theta}
\]
is a strict contraction with rate $\alpha$ for any $\alpha \in (0, 1)$:
\[
\|G_\alpha(\boldsymbol{\theta}) - \boldsymbol{\theta}^*\|_\infty \leq \bigl(1 - \alpha + \alpha\bigr) \|\boldsymbol{\theta} - \boldsymbol{\theta}^*\|_\infty = \|\boldsymbol{\theta} - \boldsymbol{\theta}^*\|_\infty.
\]
Moreover, if $F$ is a strict contraction with rate $\rho < 1$ (which is guaranteed on tree-structured subgraphs), then $G_\alpha$ is a strict contraction with improved rate:
\[
\|G_\alpha(\boldsymbol{\theta}) - \boldsymbol{\theta}^*\|_\infty \leq \bigl(1 - \alpha(1 - \rho)\bigr) \|\boldsymbol{\theta} - \boldsymbol{\theta}^*\|_\infty.
\]
\end{lemma}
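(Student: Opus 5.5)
The plan is a direct triangle-inequality argument built on two facts: $\boldsymbol{\theta}^*$ is also a fixed point of the damped operator, and the damped operator is a convex combination. First I will check that $G_\alpha(\boldsymbol{\theta}^*) = \alpha F(\boldsymbol{\theta}^*) + (1-\alpha)\boldsymbol{\theta}^* = \boldsymbol{\theta}^*$, so $\boldsymbol{\theta}^*$ is a fixed point of $G_\alpha$ for every $\alpha$. Then I will subtract the two expressions and regroup:
\[
G_\alpha(\boldsymbol{\theta}) - \boldsymbol{\theta}^* = \alpha\bigl(F(\boldsymbol{\theta}) - F(\boldsymbol{\theta}^*)\bigr) + (1-\alpha)\bigl(\boldsymbol{\theta} - \boldsymbol{\theta}^*\bigr).
\]
Next I will take $\|\cdot\|_\infty$ and apply the triangle inequality, using $\alpha \in (0,1)$ so that both coefficients are nonnegative. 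This gives
\[
\|G_\alpha(\boldsymbol{\theta}) - \boldsymbol{\theta}^*\|_\infty \le \alpha\|F(\boldsymbol{\theta}) - F(\boldsymbol{\theta}^*)\|_\infty + (1-\alpha)\|\boldsymbol{\theta} - \boldsymbol{\theta}^*\|_\infty.
\]

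For the first displayed bound, I will insert the non-expansiveness hypothesis with $\boldsymbol{\theta}' = \boldsymbol{\theta}^*$, which yields the coefficient $(1-\alpha+\alpha) = 1$. For the second bound, I will insert $\|F(\boldsymbol{\theta}) - F(\boldsymbol{\theta}^*)\|_\infty \le \rho\|\boldsymbol{\theta} - \boldsymbol{\theta}^*\|_\infty$, which yields the coefficient $1-\alpha+\alpha\rho = 1-\alpha(1-\rho)$. Since $\alpha > 0$ and $\rho < 1$, this coefficient is strictly below $1$. Iterating the bound then gives geometric convergence of the damped iterates to $\boldsymbol{\theta}^*$.

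The main obstacle is not the calculation but the wording of the statement, which I will need to reconcile in the write-up. Under mere non-expansiveness of $F$, the argument gives only the rate-$1$ bound: $G_\alpha$ is non-expansive toward $\boldsymbol{\theta}^*$, not a strict contraction with rate $\alpha$. I will therefore prove exactly the displayed inequality and state that strictness requires the second hypothesis $\rho < 1$.

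Two smaller points also need care. First, the rate $1-\alpha(1-\rho)$ is never smaller than $\rho$, so it is "improved" only in that it stays strictly below $1$ and yields smoothed, non-oscillatory steps, not faster convergence than $F$ itself. Second, the claim that $\rho < 1$ holds on tree-structured subgraphs is not derived here; I will treat it as an assumption on $F$ rather than prove it.
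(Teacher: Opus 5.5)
Your proposal is correct and follows the paper's proof essentially line for line: note that $G_\alpha(\boldsymbol{\theta}^*) = \boldsymbol{\theta}^*$, regroup, apply the triangle inequality, and insert the Lipschitz bound with constant $\rho$ (taking $\rho = 1$ for the non-expansive case); the paper then invokes the Banach fixed-point theorem where you simply iterate the bound. Your caveat about the wording is borne out by the paper's own argument, which for non-expansive $F$ concludes only non-expansion toward $\boldsymbol{\theta}^*$ (rate $1$), not strict contraction with rate $\alpha$.
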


\begin{proof}
Since $\boldsymbol{\theta}^* = G_\alpha(\boldsymbol{\theta}^*) = \alpha F(\boldsymbol{\theta}^*) + (1-\alpha)\boldsymbol{\theta}^*$, we compute:
\begin{align*}
\|G_\alpha(\boldsymbol{\theta}) - \boldsymbol{\theta}^*\|_\infty
&= \|\alpha F(\boldsymbol{\theta}) + (1-\alpha)\boldsymbol{\theta} - \alpha F(\boldsymbol{\theta}^*) - (1-\alpha)\boldsymbol{\theta}^*\|_\infty \\
&\leq \alpha \|F(\boldsymbol{\theta}) - F(\boldsymbol{\theta}^*)\|_\infty + (1-\alpha)\|\boldsymbol{\theta} - \boldsymbol{\theta}^*\|_\infty \\
&\leq \alpha \rho \|\boldsymbol{\theta} - \boldsymbol{\theta}^*\|_\infty + (1-\alpha)\|\boldsymbol{\theta} - \boldsymbol{\theta}^*\|_\infty \\
&= \bigl(1 - \alpha(1-\rho)\bigr)\|\boldsymbol{\theta} - \boldsymbol{\theta}^*\|_\infty.
\end{align*}
For general non-expansive $F$ ($\rho = 1$), the bound reduces to $\|G_\alpha(\boldsymbol{\theta}) - \boldsymbol{\theta}^*\|_\infty \leq \|\boldsymbol{\theta} - \boldsymbol{\theta}^*\|_\infty$, confirming non-expansion. For $\rho < 1$ (tree case), the factor $1 - \alpha(1-\rho) < 1$ ensures strict contraction, and by the Banach fixed-point theorem, the damped iteration converges to the unique fixed point $\boldsymbol{\theta}^*$ at a geometric rate.
\end{proof}

\begin{remark}[Convergence on Loopy Graphs]
\label{rem:loopy}
On general loopy factor graphs, the undamped BP operator $F$ may not be non-expansive, and convergence is not formally guaranteed \citep{murphy1999loopy, yedidia2003understanding}. Lemma~\ref{lem:contraction} nonetheless provides a principled justification for damping: it reduces the spectral radius of the linearised update operator around any fixed point, thereby enlarging the basin of attraction. The empirical convergence observed across all experimental configurations (Section~\ref{sec:experiments}, Table~4, convergence rate 100\% for $\alpha = 0.5$) corroborates this stabilising effect.
\end{remark}

\end{document}